\documentclass{article}

\usepackage[preprint]{neurips_2026}
\usepackage[utf8]{inputenc}
\usepackage[T1]{fontenc}
\usepackage{hyperref}
\usepackage{url}
\usepackage{booktabs}
\usepackage{amsfonts}
\usepackage{nicefrac}
\usepackage{microtype}
\usepackage{xcolor}

\usepackage{amsthm}
\usepackage{amsmath}
\usepackage{graphicx}
\usepackage{subcaption}
\usepackage{multirow}
\usepackage{enumitem}

\newtheorem{proposition}{Proposition}

\newcommand{\Pmass}{P_{\mathrm{mass}}}

\newcommand{\Sc}{S_c}
\newcommand{\Scstar}{S_{c^*}}
\newcommand{\Spread}{\mathrm{Spread}}

\title{Hallucination as Commitment Failure:\\Larger LLMs Misfire Despite Knowing the Answer}

\author{%
  Jewon Yeom$^{1}$ \quad Jaewon Sok$^{2}$ \quad Heejun Kim$^{3}$ \\
  \bfseries Seonghyeon Park$^{4}$ \quad Jeongjae Park$^{1}$ \quad Taesup Kim$^{1,}$\thanks{Corresponding author.} \\
  $^{1}$Graduate School of Data Science, Seoul National University \\
  $^{2}$Department of Rural Systems Engineering, Seoul National University \\
  $^{3}$Electrical Engineering and Computer Science, Gwangju Institute of Science and Technology \\
  $^{4}$Department of Aerospace Engineering, Seoul National University
}

\begin{document}
\maketitle

\begin{abstract}
Hallucination is often viewed as a direct consequence of missing knowledge: a model answers incorrectly when the correct answer is absent from its generation-time distribution, and correctly when it is present. We test this assumption by introducing a semantic notion of answer availability that aggregates token-level variants expressing the same answer concept, and asks whether the correct concept is already available at the moment the model commits to an answer. Across Qwen and Llama models from 0.8B to 72B in both Instruct and Base variants, 16--47\% of Instruct hallucinations occur with substantial probability mass already on the correct concept, and the rate rises monotonically with scale. Comparing such failures against correct generations with matched semantic support, the distinguishing factor is not whether the correct concept is represented, but how its probability is distributed: correct generations concentrate mass on a single surface form, hallucinations disperse it across alternatives. The same sharpening asymmetry extends across multi-token generation and is detectable in pre-generation hidden states. Together, these results identify a single mechanism: instruction tuning sharpens answer commitment with scale, making helpfulness and confident hallucination two consequences of the same underlying disposition.
\end{abstract}

%=============================================================================
\section{Introduction}
\label{sec:intro}
%=============================================================================

Large language models (LLMs) frequently produce fluent but factually incorrect outputs---hallucinations---that undermine their reliability in safety-critical applications \citep{ji2023survey}. A natural first question is \emph{where in the generation trajectory} a hallucination is decided. If hallucination emerges everywhere, post-hoc analysis of full sequences is necessary; if it localizes at specific steps, both detection and intervention should target those steps.

Recent work in the reasoning literature suggests sharp localization. Inspecting token-level entropy $H(y_t \mid Q, y_{<t})$ during greedy decoding reveals that entropy is highly non-uniform across the sequence: at most steps it is near zero (the next token is essentially deterministic) but at a small number of steps it spikes sharply. \citet{wang2025high} report that $\sim$20\% of tokens in chain-of-thought traces carry high entropy and act as ``forking tokens'' that determine reasoning paths; \citet{vassoyan2025critical} identify ``critical tokens'' as decision points where models are most error-prone. Figure~\ref{fig:motivation} shows the same phenomenon in a QA setting: an early spike fixes the domain of the answer (\texttt{Britain}), a later spike selects the answer entity (\texttt{Nicola}), and the steps in between are syntactic continuations following automatically.

\begin{figure}[t]
\centering
\includegraphics[width=\linewidth]{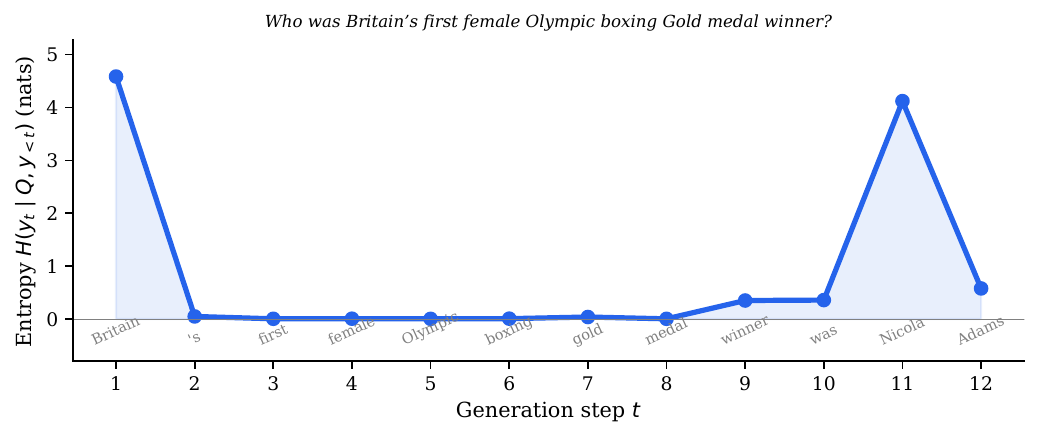}
\caption{Token entropy $H(y_t \mid Q, y_{<t})$ across a representative generation trajectory (Qwen3.5-9B Instruct). Entropy is near zero at most steps but spikes sharply at a small number of \emph{commitment steps}.}
\label{fig:motivation}
\vspace{-1.5em}
\end{figure}

A natural follow-up is whether entropy at these spikes is itself a hallucination signal. Existing work has established that it is not, in a stronger form than we will need: \citet{simhi2025trust} document hallucinations produced with high certainty even when the model demonstrably has the correct knowledge, \citet{liu2025delusions} formalize ``high-belief hallucinations'' as a distinct phenomenon from confidence-based detection, and the original semantic entropy work of \citet{farquhar2024detecting} explicitly notes that confidently wrong outputs are a separate phenomenon from the confabulation regime that semantic entropy targets. The literature's response has been to develop more sophisticated estimators \citep{kuhn2023semantic, ma2025semantic} or perturbation-based diagnostics \citep{simhi2025trust} that better separate hallucinated from non-hallucinated outputs. We pursue an orthogonal direction: rather than designing a better detector, we ask what the model's distribution is doing at the commitment step when the final answer is hallucinated, regardless of whether the wrong answer is emitted with high confidence or not.

We refer to the answer-emission step as the \emph{commitment step} $t_c$. In short-form QA with instruction-tuned models, $t_c = 1$ (§\ref{sec:eps_align}), so the prompt format fixes the commitment step at $t=1$ and lets us inspect the distribution at a single, known step. What it reveals is that ``high entropy'' has two structurally different sources: mass spread over genuinely different answers, and mass spread over different surface forms of the same answer (\texttt{Paris}, \texttt{ Paris}, \texttt{paris}, or \texttt{St}, \texttt{Saint}, \texttt{C} for \texttt{St.\ Basil's Cathedral}, Figure~\ref{fig:fragmentation}). To distinguish them, we define a \emph{concept} as the equivalence class of token completions denoting the same answer and introduce the \emph{per-step semantic probability mass} $\Pmass(t; c) = \sum_{v \in \Sc} P_\theta(v \mid Q, y_{<t})$, where $\Sc$ collects the first-token IDs of the concept's surface forms. With $\Sc$ built from ground-truth aliases, $\Pmass(t; c^*)$ is an analytical probe---requiring the answer concept as input---rather than a deployable detector, but precisely this property lets us ask whether the model put substantial mass on the right answer at the moment of commitment.

\begin{figure}[t]
\centering
\includegraphics[width=\linewidth]{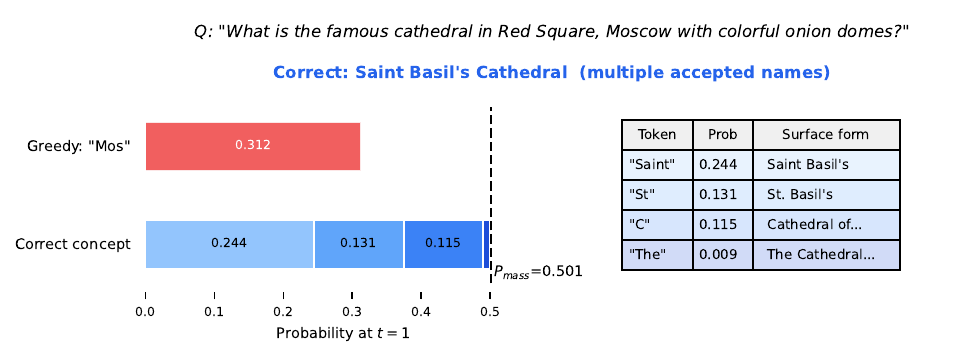}
\caption{Vocabulary fragmentation at the commitment step: the correct concept's mass (0.501 total) is split across \texttt{Saint} (0.244), \texttt{St} (0.115), and \texttt{C} (0.131); the greedy token is the competing \texttt{Mos} (0.312). Greedy decoding hides what the distribution says about the correct concept. This pattern is most pronounced in small and Base models; in large Instruct models, fragmentation collapses (§\ref{sec:fragmentation}) and commitment failures arise instead from a wrong concept's token being even sharper than the correct concept's collapsed mass.}
\label{fig:fragmentation}
\end{figure}

The headline finding is that across nine instruction-tuned Qwen and Llama models from 0.8B to 72B, 16\% to 47\% of hallucinated outputs have $\Pmass(t_c; c^*) \geq 0.2$: the model assigned non-trivial mass to the correct concept yet produced a wrong final answer. We call these \emph{commitment failures}, and the rate rises monotonically with scale across both Qwen and Llama families. Commitment failures decompose into two cases: in $\sim$20\%, the greedy first token does not match any surface form of $c^*$ at all (\emph{first-token selection failures}); in the remaining $\sim$80\%, the greedy first token does land on a surface form of $c^*$ but the continuation diverges (\emph{multi-token divergences}). The first sub-population isolates a particularly clean question: when the model put substantial mass on $c^*$ at the commitment step yet selected a token outside $\Scstar$, what does its distribution look like? We compare against \emph{matched correct samples}---correct outputs whose $\Pmass(t_c; c^*) \geq 0.2$, drawn from the same range of correct-concept mass---and find that selection failures have a three-fold lower maximum probability on any single surface-form token of $c^*$ (0.26 vs.\ 0.78). The model has the same \emph{amount} of mass on the correct concept; it just has it spread across alias forms (\texttt{Saint}, \texttt{St}, \texttt{C}) rather than concentrated on one, so a competing concept's single dominant token wins the argmax.

The empirical driver of the scale trend is instruction tuning, not scale itself. The probability assigned to the (wrong) greedy token in first-token selection failures rises monotonically across Instruct models---Qwen: 0.31 (0.8B) to 0.57 (72B); Llama: 0.33 (1B) to 0.49 (70B)---but stays flat at $\sim$0.30 across Base models of the same sizes. The same pattern extends to multi-token answers: in 70B+ Instruct, $H_{t=2}$ is $\approx 0.05$ when the bigram $(y_1, y_2)$ stays on a valid alias prefix of $c^*$ and substantially higher when it diverges (Cohen's $d = 1.29$ across 18 models)---instruction tuning sharpens commitment specifically along $c^*$-aligned phrases. Instruction tuning sharpens commitments at multiple levels, and the same sharpening produces confident correctness when the committed phrase is right and confident misselection when it is wrong---making confident hallucination one face of the broader ``alignment tax'' that has been documented as accuracy loss \citep{ouyang2022training}, calibration loss \citep{openai2023gpt4, hu2025alignment}, and mode collapse. The analytical probe $\Pmass$ (§\ref{sec:setup}), the commitment-failure phenomenon and its scale dependence (§\ref{sec:selection}), the within-population characterization (§\ref{sec:fragmentation}), and the representation-level evidence for instruction-induced sharpening (§\ref{sec:frontload}) together constitute the contribution of this paper.

%=============================================================================
\section{Related Work}
\label{sec:related}
%=============================================================================

\textbf{Confident hallucination and uncertainty-based detection.} Token- and sequence-level uncertainty has been the dominant lens for hallucination detection: perplexity \citep{ren2023outofdistribution}, length-normalized NLL \citep{malinin2021uncertainty}, predictive entropy \citep{kadavath2022language}, importance-weighted scoring \citep[MARS;][]{bakman2024mars}, semantic entropy and its variants \citep{kuhn2023semantic, farquhar2024detecting, ma2025semantic}, sampling consistency \citep[SelfCheckGPT;][]{manakul2023selfcheckgpt}, consistency-confidence aggregation \citep[CoCoA;][]{vashurin2025cocoa}, and confidence elicitation \citep{xiong2024llms}. A growing line of work documents that confident hallucination is itself a phenomenon: \citet{farquhar2024detecting} note that semantic entropy does not address confidently wrong outputs, \citet{simhi2025trust} formalize ``CHOKE'' (certain hallucinations overriding known evidence), and \citet{liu2025delusions} characterize ``delusions'' as high-belief hallucinations. \citet{calderon2026empty} characterize a closely related distinction (``empty shelves'' vs.\ ``lost keys''), finding that recall---not encoding---is the dominant bottleneck even in frontier models, through behavioral fact-level profiling; we provide complementary distributional analysis at the commitment step. These works establish the phenomenon at the response level; we provide its structural account at the commitment step (§\ref{sec:selection},~\ref{sec:fragmentation}).

\textbf{Calibration and the alignment tax.} \citet{kadavath2022language} showed pretrained LLMs are well-calibrated under appropriate elicitation, while \citet{openai2023gpt4} reported that pretraining yields well-calibrated probabilities but RLHF post-training degrades calibration substantially---a finding extended in \citet{xie2024adaptive} for the token-level case and \citet{chhikara2025confidencegap} for the question-type case. More broadly, the ``alignment tax'' originally framed as a drop in task accuracy after RLHF \citep{ouyang2022training} is increasingly understood to include calibration loss and mode collapse: \citet{hu2025alignment} document that alignment makes models overconfident with reduced output diversity, framing this as a calibration--alignment trade-off. We give this its mechanism at the moment of commitment (§\ref{sec:fragmentation}): the same sharpening drives both confident correctness and confident misselection.

\textbf{Token-level decision points.} \citet{wang2025high} show that high-entropy ``forking tokens'' in chain-of-thought reasoning carry a disproportionate share of the learning signal in RLVR. \citet{vassoyan2025critical} identify ``critical tokens'' as decision points where models are most error-prone. We extend the same phenomenon to short-form QA (Figure~\ref{fig:motivation}) and show that the relevant signal at the step is concept-grouped mass, not individual-token entropy.

\textbf{Internal representations and first-token signal.} Truthfulness is linearly decodable from hidden states \citep{burns2023discovering, azaria2023internal, marks2024geometry}; DoLa \citep{chuang2024dola} and ITI \citep{li2024inference} act on this for decoding-time intervention. SEP \citep{kossen2024semantic} introduces token-before-generation (TBG) probing. Token-level analyses have converged on first-token importance \citep{snel2025first, zhao2024first}; HaMI \citep{niu2025hami} adaptively selects informative tokens. We refine this: what matters is not position but the answer-level commitment step (first token in instruction-tuned short-form QA, but migrating in long-form generation; §\ref{sec:eps_align}), and we provide the first systematic Instruct--Base comparison for the TBG setting (§\ref{sec:frontload}).

%=============================================================================
\section{Setup}
\label{sec:setup}
%=============================================================================

\textbf{Concept and $\Pmass$.} Given a query $Q$, an autoregressive LLM $P_\theta$ generates tokens $y_1, y_2, \ldots$ A \emph{concept} $c$ is an equivalence class of token completions denoting the same answer; its first-token surface forms collect into a \emph{concept token set} $\Sc$. We study the \emph{per-step semantic probability mass}
\begin{equation}
\Pmass(t; c) = \sum_{v \in \Sc} P_\theta(v \mid Q, y_{<t}),
\end{equation}
the total mass at step $t$ on any first-token surface form of $c$. To analyze hallucination we set $c = c^*$, the ground-truth concept, with $\Scstar$ built deterministically from the dataset's aliases (Appendix~\ref{app:sc_construction}). Under a latent-concept generation model, $\Pmass(t; c^*)$ approximates the (unobservable) \emph{concept belief} $P_\theta(c^* \mid Q, y_{<t})$ when $\Scstar$ is alias-complete and concepts are well-separated; we make this precise in Appendix~\ref{app:proofs} (Proposition~\ref{prop:pmass}).

\textbf{Models and data.} Our primary scale ablation uses Qwen3.5 \citep{qwen2025qwen3} at four sizes (0.8B, 2B, 4B, 9B) in both Instruct and Base variants, with all 4-bit NF4 quantization, paired with Llama-3.2 (1B, 3B) and Llama-3.1 (8B) \citep{llama2024llama3} in both variants---fourteen models total at small to mid scale. We extend the scale ablation to four large models (Qwen2.5-72B \citep{qwen2024qwen25} and Llama-3.1-70B in both Instruct and Base) for the wrong-token sharpening and commitment-failure rate analyses. We use TriviaQA \citep{joshi2017triviaqa} and NQ-Open \citep{kwiatkowski2019natural} for short-form QA (3{,}000 samples per model) and MMLU \citep{hendrycks2021measuring} together with ARC-Challenge \citep{clark2018think} for multiple-choice QA (2{,}672 samples per model); the representation analyses in §\ref{sec:frontload} use a 1{,}500-sample subset (1{,}000 MCQA + 500 Short-QA).

\textbf{Hallucination.} Throughout, a response is a \emph{hallucination} if it fails substring matching against the ground-truth aliases (Short-QA) or selects a wrong option (MCQA); we use ``hallucinated'' and ``incorrect'' interchangeably, following the convention of SE \citep{farquhar2024detecting} and SEP \citep{kossen2024semantic}.

\textbf{Metrics.} Detection performance is AUROC with hallucination as the positive class. Probes are 5-fold CV logistic regression on hidden states. Calibration uses ECE \citep{guo2017calibration} and Brier score.

%=============================================================================
\section{Results}
\label{sec:results}
%=============================================================================

\subsection{Where in the trajectory does correctness signal live?}
\label{sec:eps_align}

Before turning to the central analysis, we verify the entropy/$\Pmass$ picture in our own data using long-form generation, where the commitment step $t_c$ is not at $t=1$. We collected 500 long-form Qwen3.5-9B Instruct responses (\texttt{Answer in a complete sentence}) and centered each trajectory on its commitment step $t_c$.

Figure~\ref{fig:epoch_aligned}(a) plots entropy as a function of step relative to $t_c$. Entropy peaks sharply at $t_c$ for both correct and hallucinated samples, and hallucinated trajectories carry uniformly higher entropy at every relative step, but the per-sample $H(t_c)$ distributions overlap substantially (Wilcoxon $p = 0.055$). The max-entropy step $t_H$ exactly matches $t_c$ in only 20\% of samples, within one step in 32\%---a noisy localization at best.

Figure~\ref{fig:epoch_aligned}(b) re-plots the same trajectories using $\Pmass(t; c^*)$ instead. $\Pmass$ is essentially zero everywhere except $t_c$, where it spikes to 0.92 for correct samples and 0.77 for hallucinated ones; generated-token probability $P(y_t)$ is nearly flat across both groups (Appendix Figure~\ref{fig:tokprob_aligned}). The relevant fact is not that $\Pmass$ separates the two classes (it has access to ground-truth aliases) but that the gap is small: at the commitment step, hallucinated samples place a substantial 0.77 average mass on the correct concept yet still emit a different one. This sub-population is what we analyze in the rest of the paper.

\textbf{The commitment step is where the information lives.} Figure~\ref{fig:epoch_aligned}(c) makes this concrete: per-step detection AUROC of $\Pmass(t; c^*)$ is at chance two or more steps before $t_c$, climbs to its peak at $t_c$, and decays back to chance within a few steps after. Generated-token probability $P(y_t)$ over the same window is flat (0.55--0.62 throughout, no peak): token-level confidence barely moves around $t_c$, while concept-grouped mass rises and falls sharply. This is the structural justification for studying the model's distribution at $t_c$ specifically rather than aggregating across the trajectory.

\begin{figure}[t]
\centering
\includegraphics[width=\linewidth]{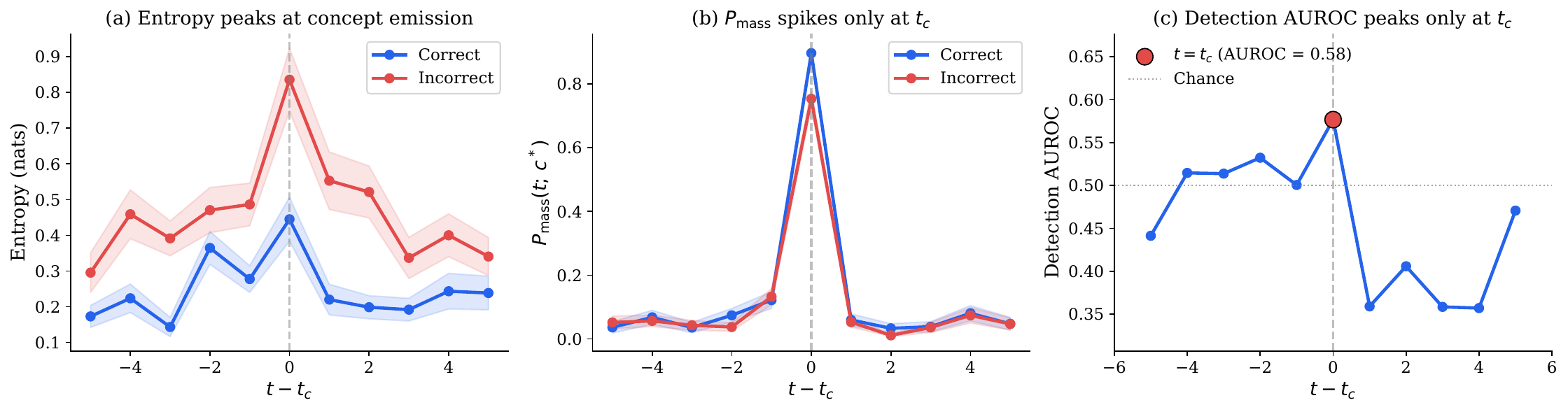}
\caption{500 long-form Qwen3.5-9B Instruct responses aligned to each trajectory's commitment step $t_c$. \emph{(a)~Entropy} peaks at $t_c$ for both groups; hallucinated trajectories run uniformly higher, but per-sample distributions at $t_c$ overlap. \emph{(b)~$\Pmass(t; c^*)$} is essentially zero except at $t_c$, where it concentrates to 0.92 (correct) and 0.77 (hallucinated). The relevant fact is not the separation between groups but the high $\Pmass$ on the hallucinated side: at the commitment step, the model often has substantial mass on the correct concept yet emits a competing one. \emph{(c)~Detection AUROC} of $\Pmass$ peaks sharply at $t_c$ and decays to chance off-step, showing that the predictive information about correctness is localized to the commitment step itself.}
\label{fig:epoch_aligned}
\end{figure}

\subsection{Does the model have the answer when it hallucinates?}
\label{sec:selection}

We now turn to short-form QA, where instruction-tuned models answer immediately and the commitment step is fixed at $t = 1$ (Appendix Table~\ref{tab:firsttoken}). This lets us inspect the model's distribution at a single, known step, and ask the central question: among hallucinated outputs, what does $\Pmass(t_c; c^*)$ look like?

\textbf{$\Pmass$ recovers a coherent confidence signal.} $\Pmass$ is well-calibrated (ECE 0.023--0.096 across 7 Instruct models; Appendix Figure~\ref{fig:calibration_app}), with accuracy increasing monotonically across $\Pmass$ bins, and it assigns substantially more mass to the correct concept than the generated greedy token's probability whenever the answer has multiple surface forms (Appendix Table~\ref{tab:epistemic_app})---it captures the concept-level structure that token-level confidence misses. The per-step result in Figure~\ref{fig:epoch_aligned}(c) confirms $t_c$ is the right inspection point: aggregating $\Pmass$ across the trajectory underperforms the single-step quantity at $t_c$ (Appendix~\ref{app:aggregation}).

\textbf{Commitment failures.} A hallucinated sample is a \emph{commitment failure} if $\Pmass(t_c; c^*) \geq 0.2$: substantial mass on the correct concept yet a wrong final answer. The phenomenon arises because greedy emission depends on the maximum single-token probability, not on $\Pmass$, so individual surface-form tokens of $c^*$ may each be small enough that a competing concept's single dominant token wins, or the multi-token continuation may diverge after a correct first token. CF\% rises monotonically from 16\% at 0.8B to 47\% at 70B Instruct (Table~\ref{tab:cf_by_scale}), holding across both Qwen and Llama families. Larger models do not produce uniformly fewer errors; they shift the error distribution toward commitment failures. The 0.2 threshold is conservative; the trend is robust to threshold choice in $[0.1, 0.4]$ (Appendix~\ref{app:knew_threshold}).

\textbf{Two structural sub-populations.} Commitment failures decompose into two cases at the token level. In a \emph{first-token selection failure}, the greedy emission $y_1$ does not match any surface form of $c^*$ at all (e.g., the answer is \texttt{Saint Petersburg} and the model emits \texttt{Mos} to begin \texttt{Moscow}). In a \emph{multi-token divergence}, $y_1$ does land on a surface form of $c^*$ but the multi-token continuation diverges from any valid alias (e.g., the answer is \texttt{Adam Smith} but the model emits \texttt{Adam Levine}). The two are distributionally different and we treat them separately. Across the scale ablation, first-token selection failures account for roughly 20\% of commitment failures, rising monotonically with scale: from 2.9\% of all hallucinations at 0.8B to 6.0\% at 72B in Qwen Instruct, and from 3.3\% at 1B to 10.2\% at 70B in Llama Instruct (Table~\ref{tab:cf_by_scale}). The remaining $\sim$80\% are multi-token divergences---hallucinations where the first token is on track but the continuation is not. Multi-token divergences are analyzed in detail in §\ref{sec:fragmentation} and Appendix~\ref{app:t2_entropy}.

\begin{table}[t]
\caption{Commitment-failure rate (CF\%) and decomposition into first-token selection failures (SF, greedy $\notin \Scstar$) and multi-token divergences (Div, greedy $\in \Scstar$ but final answer wrong) across the Instruct scale ablation. Acc: Short-QA accuracy. AUROC: $\Pmass(t{=}1)$ AUROC. Halluc: hallucinated sample count. SF\%: SF as fraction of all hallucinations. Full per-model details including Base models in Appendix Table~\ref{tab:scale_full}.}
\label{tab:cf_by_scale}
\centering
\small
\setlength{\tabcolsep}{4pt}
\begin{tabular}{lrrrrrrr}
\toprule
Model & Acc & AUROC & Halluc & CF (CF\%) & SF & Div & SF\% \\
\midrule
Qwen3.5-0.8B Inst   & 8.3\%  & .898 & 2{,}751 & 453 (16\%) & 81  & 372 & 2.9\% \\
Qwen3.5-2B Inst     & 11.0\% & .893 & 2{,}671 & 466 (17\%) & 97  & 369 & 3.6\% \\
Qwen3.5-4B Inst     & 24.6\% & .912 & 2{,}262 & 589 (26\%) & 128 & 461 & 5.7\% \\
Qwen3.5-9B Inst     & 29.4\% & .887 & 2{,}117 & 667 (32\%) & 128 & 539 & 6.0\% \\
Qwen2.5-72B Inst    & 36.4\% & .830 & 1{,}908 & 777 (41\%) & 114 & 663 & 6.0\% \\
\midrule
Llama-3.2-1B Inst   & 14.8\% & .935 & 2{,}559 & 401 (16\%) & 84  & 317 & 3.3\% \\
Llama-3.2-3B Inst   & 31.4\% & .902 & 2{,}054 & 577 (28\%) & 157 & 420 & 7.6\% \\
Llama-3.1-8B Inst   & 32.8\% & .882 & 2{,}018 & 657 (33\%) & 178 & 479 & 8.8\% \\
Llama-3.1-70B Inst  & 44.7\% & .816 & 1{,}659 & \textbf{780 (47\%)} & 170 & 610 & 10.2\% \\
\bottomrule
\end{tabular}
\vspace{-1.0em}
\end{table}

\subsection{Why does the model commit to the wrong token?}
\label{sec:fragmentation}

We focus here on first-token selection failures, the strict sub-population where greedy emission $y_1 \notin \Scstar$ despite $\Pmass(t_c; c^*) \geq 0.2$. These cases are directly inspectable at the token level: the model assigned substantial mass to the correct concept but a single token of a competing concept won the argmax. We ask two questions: (i)~within a model, what distinguishes the distribution at a selection failure from a correct-but-comparable sample (one with a similar amount of mass on $c^*$)? (ii)~How do these distributions change with scale?

\textbf{Within-population: less concentrated correct mass.} To isolate the token-level structure, we compare two groups in Qwen3.5-9B Instruct, both restricted to samples with $\Pmass \geq 0.2$: first-token selection failures (those that hallucinated, $N = 128$) and \emph{matched correct samples} (those that answered correctly, $N = 840$). Both groups have substantial mass on the correct concept; the only difference is the outcome. The maximum probability assigned to any single surface form of $c^*$, $\max_{v \in \Scstar} P_\theta(v)$, is dramatically smaller in the failure group: mean 0.26 vs.\ 0.78 (Welch $t = 47.6$, $p < 10^{-180}$, Cohen's $d = 2.98$; Appendix~\ref{app:stats_primer}). The same comparison across all 18 models gives $d < 0$ in 100\% of cases (median $|d| = 1.93$, range $[1.01, 4.30]$); within Instruct models $|d|$ tends to grow with scale (Qwen Inst: 1.34$\to$2.98$\to$4.30 from 0.8B to 72B; Appendix Table~\ref{tab:within_population}). Vocabulary fragmentation across alias forms (Figure~\ref{fig:fragmentation}, e.g.\ \texttt{Saint}, \texttt{St}, \texttt{C}) is the most concrete realization in small and Base models; in mid-to-large Instruct models, the within-concept distribution has typically already collapsed onto a single alias and the SF--Corr gap arises from a different route, characterized below.

\begin{figure}[t]
\centering
\includegraphics[width=\linewidth]{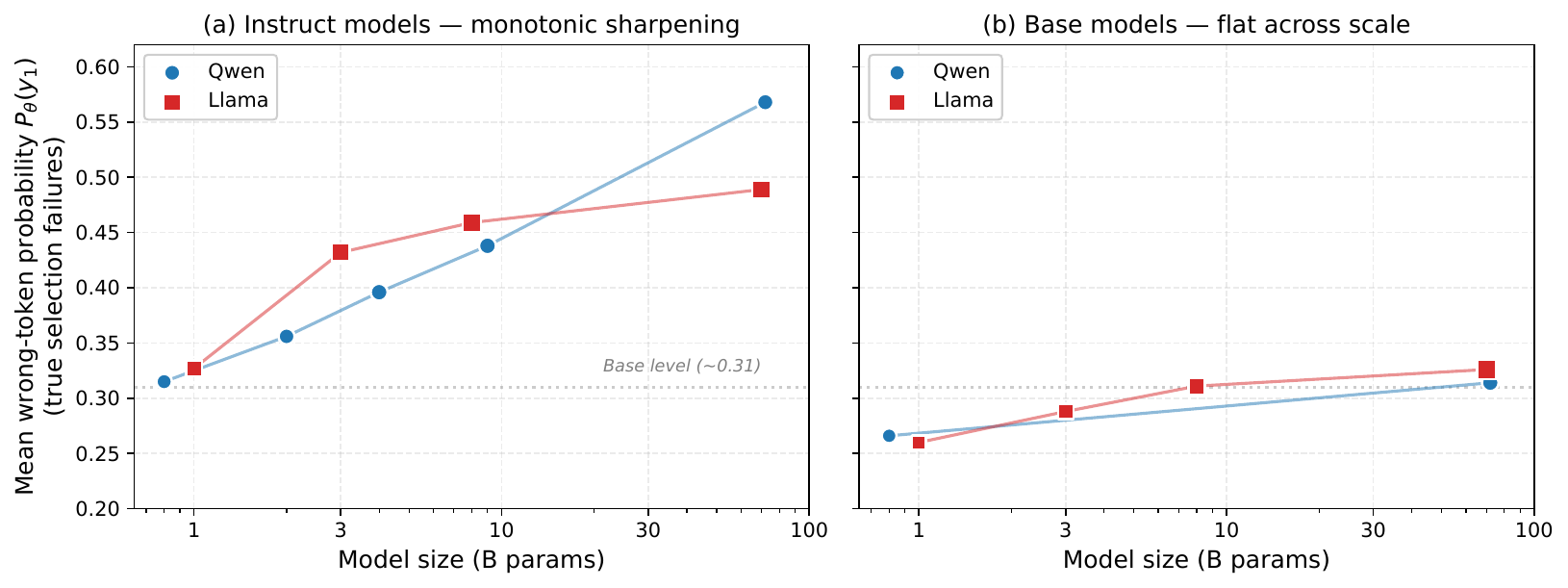}
\caption{Within first-token selection failures, mean wrong-token probability $P_\theta(y_1)$ across models. \emph{(a) Instruct models}: monotonic sharpening from $\sim$0.31 at 1B to $\sim$0.49--0.57 at 70B+, in both families. \emph{(b) Base models}: flat at $\sim$0.26--0.33 across the same scale range. Marker size $\propto$ number of true selection failures; the dotted line at 0.31 marks the typical Base level. The contrast is family-independent: instruction tuning---not scale alone---is the driver.}
\label{fig:two_patterns}
\vspace{-2.0em}
\end{figure}

\textbf{Across scale: monotonic sharpening, modulated by instruction tuning.} Within first-token selection failures, the greedy emitted token is by definition outside $\Scstar$; we call its probability $P_\theta(y_1)$ the \emph{wrong-token probability}. This rises monotonically with model size in instruction-tuned models: Qwen Instruct 0.31 (0.8B) $\to$ 0.36 $\to$ 0.40 $\to$ 0.44 $\to$ 0.57 (72B); Llama Instruct 0.33 (1B) $\to$ 0.43 $\to$ 0.46 $\to$ 0.49 (70B) (Figure~\ref{fig:two_patterns}, left). Base models behave differently: across the same size range, wrong-token probability stays near 0.30 (Llama Base: 0.26 at 1B to 0.33 at 70B; Qwen Base 0.8B/72B: 0.27/0.31). Scale alone does not produce sharpening; the combination of scale and instruction tuning does. The same sharpening that produces front-loaded correctness signal in §\ref{sec:frontload} produces decisive misselection when the committed concept is wrong.

\textbf{Sharpening extends to multi-token answers.} The same sharpening operates beyond the first token. Within multi-token divergences, $H_{t=2}$ (entropy of the next-token distribution after $y_1$) measures whether the model has committed to a specific multi-token phrase at $t=1$ (low $H_{t=2}$) or is still deciding. We classify each divergence by whether the second token continues an alias of $c^*$. \emph{Type A}: bigram $(y_1, y_2)$ matches the start of a valid alias but the continuation diverges into a different entity (e.g., \texttt{George Washington Carver}---the agricultural scientist---when the answer is \texttt{George Washington} the first U.S.\ president; the bigram \texttt{George Washington} is shared with the alias prefix, but \texttt{Carver} fixes a different person). \emph{Type B}: bigram diverges already at $y_2$ (e.g., \texttt{Adam Lambert} when the answer is \texttt{Adam Smith}: $y_1 = $ \texttt{Adam} is in $\Scstar$ but \texttt{Lambert} breaks the alignment). Across 18 Instruct/Base models, Type A divergences have substantially lower $H_{t=2}$ than Type B (median Cohen's $d = 1.29$, $d > 1$ in 100\% of models with sufficient $N$; Appendix~\ref{app:t2_entropy})---when the bigram is on track, the continuation is near-deterministic. The Type A fraction grows with both scale and instruction tuning, from 21--44\% at 0.8B/1B to 77--83\% at 70B+ Instruct. At 70B+ Instruct, $H_{t=2}$ within Type A divergences is 0.05--0.10---the model commits to wrong multi-token phrases with residual entropy comparable to deterministic continuations.

\textbf{Two faces of commitment failure.} The within-population analysis above measures top1 alias mass without normalizing by $\Pmass(c^*)$. To separate \emph{within-concept} structure (how the correct mass is distributed across alias forms) from \emph{between-concept} structure (how the wrong greedy token compares to the correct concept), we measure two ratios on SF samples across all 18 models: $D_2 = \max_{v \in \Scstar} P(v) / \Pmass(c^*)$ and $D_3 = P(\text{greedy}) / \Pmass(c^*)$. Within Llama Instruct, $D_2$ grows monotonically (1B 0.76 $\to$ 70B 0.99) while Llama Base plateaus (0.66 $\to$ 0.81); Qwen2.5-72B shows the same contrast (Inst 1.00 vs Base 0.76). $D_3$ follows the same pattern: Inst grows monotonically with scale (Llama 1.13$\to$1.45; Qwen 1.13$\to$1.65), Base stays flat ($\sim$1.0--1.13). The two ratios separate two failure modes: \emph{fragmentation-driven failures} (low $D_2$, low $D_3$) in small or Base models where the correct mass is split across alias tokens (the regime Figure~\ref{fig:fragmentation} illustrates), and \emph{wrong-attractor failures} (high $D_2$, high $D_3$) dominating large Instruct models, where the correct concept has collapsed onto a single alias but a wrong concept's token is even sharper. Both arise from the same instruction-induced sharpening: weak sharpening leaves correct mass spread; strong sharpening collapses fragmentation but strengthens wrong attractors at least as fast (Appendix~\ref{app:d2_d3}).

Instruction-induced sharpening therefore acts at three structural levels: at the first token (selection failures, sharper wrong-token probability), across multi-token answers (early commitment to specific phrase continuations), and within the correct concept's alias distribution ($D_2$ collapse).

\subsection{When does the model ``know'' it is going to fail?}
\label{sec:frontload}

A separate but related observation is what makes $t_c = 1$ in instruction-tuned short-form QA. The prompt format alone is not enough: Base models, given the same short-form prompt, emit filler tokens first and delay $t_c$ to later steps (Appendix Table~\ref{tab:firsttoken}). The Instruct--Base contrast lets us ask whether the front-loading is purely an output-formatting effect or reflects deeper changes in the representation.

\textbf{Output-level detection.} On MCQA, Instruct models reach near-perfect detection AUROC (0.974--0.999) using only $P(\text{correct option})$, while Base models span 0.558--0.748 (Figure~\ref{fig:probe_attn_v2}, right; +0.29 average gap).

\textbf{Attention to the question.} At $t = 1$, Instruct models allocate a higher fraction of last-layer attention to question tokens (+0.09 average; Figure~\ref{fig:probe_attn_v2}, middle), consistent with retrieving the answer concept from $Q$ rather than first emitting filler.

\textbf{Hidden-state probes (pre-generation).} Logistic regression on the last-layer hidden state at $t = 1$, before any token is generated, yields Instruct $>$ Base in all four sizes (+0.08 average; Figure~\ref{fig:probe_attn_v2}, left). The pattern holds across nearly all layers (Appendix~\ref{app:multilayer}), peaking at mid-layers, ruling out a purely output-formatting explanation. This is the first systematic Instruct--Base comparison for the TBG setting of \citet{kossen2024semantic}: pre-generation probe AUROC is 0.61--0.87 for Instruct versus 0.50--0.63 for Llama Base, and Base models gain substantially more from one generated token (avg pre$\to$post $\Delta_B = +0.030$ vs.\ $\Delta_I = +0.005$; Appendix Table~\ref{tab:postgen_app}). Correctness-predictive information is genuinely front-loaded in Instruct models.

\begin{figure}[t]
\centering
\includegraphics[width=\linewidth]{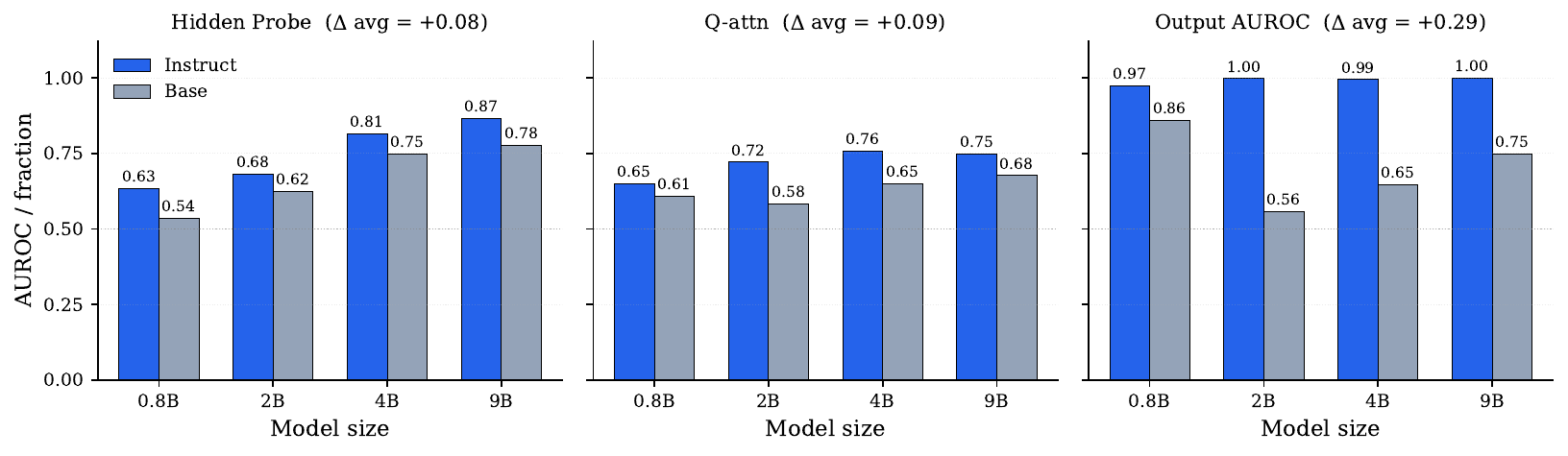}
\caption{Three-level Instruct--Base comparison at $t = 1$. Hidden Probe: 5-fold CV AUROC on last-layer hidden states (MCQA, $N{=}1{,}000$). Q-attn: fraction of last-layer attention on question tokens (Short-QA, $N{=}500$). Output AUROC: $P(\text{correct option})$ on MCQA. Average Instruct--Base gaps: $+0.08$ (Hidden Probe), $+0.09$ (Q-attn), $+0.29$ (Output AUROC).}
\label{fig:probe_attn_v2}
\vspace{-1.0em}
\end{figure}

The output-level gap (+0.29) is larger than the hidden-state gap (+0.08): instruction tuning's effect is partly representational (information is in the hidden states) and substantially in the output mapping (the projection amplifies it sharply).

%=============================================================================
\section{Discussion and Limitations}
\label{sec:discussion}
%=============================================================================

\textbf{Hallucination as commitment failure.} Across 18 models from 0.8B to 72B, 16\% to 47\% of Instruct hallucinations leave non-trivial mass on the correct concept at the commitment step yet produce a wrong final answer, with the rate rising monotonically with scale. As models scale, more hallucinations come from commitment failures despite the population-level distribution including the correct answer, not from the answer being absent. The within-population finding sharpens this: at matched $\Pmass$, failures consistently have lower top-token mass on $c^*$ ($d < 0$ in 100\% of 18 models, $|d|$ growing with scale within each Instruct family from 1.34 to 4.30; Appendix~\ref{app:within_population_app}). The structural difference between hallucination and correctness at comparable concept-level mass is whether any single surface form is concentrated enough to win. The empirical driver is instruction-induced sharpening: Instruct models sharpen first-token commitments with scale (0.31 to 0.57), Base models remain flat at $\sim$0.30. The same sharpening produces front-loaded correctness signal (§\ref{sec:frontload}) and decisive misselection when the committed concept is wrong.

\textbf{Sharpening operates at multiple granularities.} The first-token effect extends both inward and forward. Within multi-token divergences, the second-token entropy after an alias-prefix-aligned bigram falls steeply with scale and instruction tuning, reaching 0.05--0.10 at 70B+ Instruct (Appendix~\ref{app:t2_entropy}); by the second token, the model has effectively committed to a specific multi-token continuation, and a wrong continuation is selected with the same residual entropy as a deterministic one. The same sharpening also operates within the correct concept's alias distribution: in 70B+ Instruct, mass on the correct concept has typically collapsed onto a single alias token (Appendix~\ref{app:d2_d3}), removing fragmentation as a recoverable failure mode. Confident hallucination is therefore not a momentary slip at $t=1$ but the natural endpoint of a sharpening process operating at three structural levels---first-token selection, multi-token phrase commitment, and within-concept alias collapse---more decisive answers when right, more decisive misselections when wrong.

\textbf{Implications for the picture of confident hallucination.} A confident hallucination is one where the model places high probability on a wrong answer's tokens, and the standard framing treats this as evidence that the model has the wrong answer in its distribution and not the right one. Our results complicate this picture: in commitment failures the model has placed substantial mass on the \emph{correct} concept ($\Pmass \geq 0.2$) while still emitting a wrong token confidently. Token-level confidence is not coming from the absence of the right concept but from concentration on the wrong concept in spite of it. The same sharpness produces confident correctness when the committed concept is right. ``Confidently wrong'' and ``confidently right'' are two outcomes of one distributional disposition, not two different epistemic states---which may explain why uncertainty does not flag confident hallucinations \citep{simhi2025trust, liu2025delusions, farquhar2024detecting}.

\textbf{Limitations.} $\Pmass$ is an analytical probe, not a deployable detector: it requires the ground-truth alias set $\Scstar$ as input, so any practical use depends on inferring $\Scstar$ from context (e.g., by clustering top-$k$ tokens at $t_c$ by semantic equivalence). Within multi-token divergences, $\Pmass(t=1; c^*) \geq 0.2$ does not distinguish concept-level belief from phrase-level commitment to specific multi-token continuations (Appendix~\ref{app:t2_entropy}); the within-population analysis in §\ref{sec:fragmentation} controls for this by restricting to first-token selection failures. Our experiments use 1--3 token answers and greedy decoding; concept-segmented $\Pmass$ for longer generation, and the behavior of commitment failures under temperature sampling or top-$p$ truncation, are natural extensions. The analyses also span only Qwen and Llama families; we expect the same instruction-induced sharpening pattern in other open-weight families, but cannot verify it for closed models without first-token distribution access.

\textbf{Future work.} The commitment-failure phenomenon directly suggests two directions. First, greedy decoding has a structural limitation when $\Pmass(c^*)$ is high but split across surface forms: a \emph{concept-aware} decoding rule that argmaxes over alias-clustered top-$k$ tokens---rather than over individual vocabulary entries---would convert a meaningful fraction of selection failures into correct answers without retraining. Quantifying the upper bound and approximating it with semantic-similarity clustering of top-$k$ is a concrete next step. Second, our setup fixes the commitment step at $t=1$ via short-form prompting; long-form generation has multiple commitment events---domain commitment (\texttt{Britain}), answer commitment (\texttt{Nicola}), possibly others (rhetorical-frame, sub-claim)---that a richer typology should distinguish. Identifying these reliably is itself a problem: entropy is a noisy localizer (exact match in only 20\% of long-form trajectories, §\ref{sec:eps_align}), so non-entropy signals---hidden-state probes, attention concentration on $Q$, or $\Pmass$ rate of change $\Delta\Pmass(t)$ at each candidate spike---are likely better candidates and worth systematic comparison.

%=============================================================================
\section{Conclusion}
%=============================================================================

We asked what is happening at the moment of hallucination, viewed through the model's distribution at the commitment step. Defining concepts as equivalence classes of token completions and introducing per-step semantic probability mass as an analytical probe, we found that a substantial fraction of Instruct hallucinations are commitment failures: the model puts non-trivial mass on the correct concept yet produces a wrong final answer, with the rate rising monotonically with scale. Larger models do not just know more; they also misfire more often on what they know. Within these failures, the structural difference from matched correct generations is not whether the correct concept is represented, but how its mass is distributed across surface forms. Across scale, the same sharpening pattern operates at the first token, across multi-token continuations, and within the correct concept's alias distribution---uniformly in Instruct models, absent in Base. This reframes confident hallucination as a structural consequence of how mass is shaped at the commitment step, and situates it as one face of the broader alignment tax---suggesting concept-aware decoding and finer commitment-step typologies as natural follow-ups.

\clearpage
\bibliographystyle{plainnat}
\bibliography{refs}

@inproceedings{bakman2024mars,
  title={MARS: Meaning-aware response scoring for uncertainty estimation in generative LLMs},
  author={Bakman, Y. and others},
  booktitle={ACL},
  year={2024}
}

@article{clark2018think,
  title={Think you have solved question answering? Try ARC},
  author={Clark, P. and others},
  journal={arXiv preprint arXiv:1803.05457},
  year={2018}
}

@article{farquhar2024detecting,
  title={Detecting hallucinations in large language models using semantic entropy},
  author={Farquhar, S. and Kossen, J. and Kuhn, L. and Gal, Y.},
  journal={Nature},
  volume={630},
  pages={625--630},
  year={2024}
}

@inproceedings{hendrycks2021measuring,
  title={Measuring massive multitask language understanding},
  author={Hendrycks, D. and others},
  booktitle={ICLR},
  year={2021}
}

@article{ji2023survey,
  title={Survey of hallucination in natural language generation},
  author={Ji, Z. and others},
  journal={ACM Computing Surveys},
  volume={55},
  number={12},
  pages={1--38},
  year={2023}
}

@inproceedings{joshi2017triviaqa,
  title={TriviaQA: A large scale distantly supervised challenge dataset for reading comprehension},
  author={Joshi, M. and Choi, E. and Weld, D. S. and Zettlemoyer, L.},
  booktitle={ACL},
  year={2017}
}

@article{kadavath2022language,
  title={Language models (mostly) know what they know},
  author={Kadavath, S. and others},
  journal={arXiv preprint arXiv:2207.05221},
  year={2022}
}

@inproceedings{kossen2024semantic,
  title={Semantic entropy probes: Robust and cheap hallucination detection in LLMs},
  author={Kossen, J. and others},
  booktitle={NeurIPS},
  year={2024}
}

@inproceedings{li2024inference,
  title={Inference-time intervention: Eliciting truthful answers from a language model},
  author={Li, K. and others},
  booktitle={NeurIPS},
  year={2024}
}

@article{llama2024llama3,
  title={The Llama 3 herd of models},
  author={{Llama Team}},
  journal={arXiv preprint arXiv:2407.21783},
  year={2024}
}

@article{kwiatkowski2019natural,
  title={Natural questions: A benchmark for question answering research},
  author={Kwiatkowski, T. and others},
  journal={TACL},
  volume={7},
  pages={453--466},
  year={2019}
}

@article{ma2025semantic,
  title={Semantic energy: Detecting LLM hallucination beyond entropy},
  author={Ma, H. and others},
  journal={arXiv preprint arXiv:2508.14496},
  year={2025}
}

@inproceedings{malinin2021uncertainty,
  title={Uncertainty estimation in autoregressive structured prediction},
  author={Malinin, A. and Gales, M.},
  booktitle={ICLR},
  year={2021}
}

@article{niu2025hami,
  title={Robust hallucination detection in LLMs via adaptive token selection},
  author={Niu, X. and others},
  journal={arXiv preprint arXiv:2504.07863},
  year={2025}
}

@article{qwen2025qwen3,
  title={Qwen3 technical report},
  author={{Qwen Team}},
  journal={arXiv preprint arXiv:2505.09388},
  year={2025}
}

@inproceedings{ren2023outofdistribution,
  title={Out-of-distribution detection and selective generation for conditional language models},
  author={Ren, J. and others},
  booktitle={ICLR},
  year={2023}
}

@article{snel2025first,
  title={First hallucination tokens are different from conditional ones},
  author={Snel, J. and Oh, S. J.},
  journal={arXiv preprint arXiv:2507.20836},
  year={2025}
}

@article{zhao2024first,
  title={The first to know: How token distributions reveal hidden knowledge in large vision-language models},
  author={Zhao, Q. and Xu, M. and Gupta, K. and Asthana, A. and Zheng, L. and Gould, S.},
  journal={arXiv preprint arXiv:2403.09037},
  year={2024}
}

@inproceedings{guo2017calibration,
  title={On calibration of modern neural networks},
  author={Guo, C. and Pleiss, G. and Sun, Y. and Weinberger, K. Q.},
  booktitle={ICML},
  year={2017}
}

@inproceedings{xiong2024llms,
  title={Can {LLM}s express their uncertainty? An empirical evaluation of confidence elicitation in {LLM}s},
  author={Xiong, M. and Hu, Z. and Lu, X. and Li, Y. and Fu, J. and He, J. and Hooi, B.},
  booktitle={ICLR},
  year={2024}
}

@inproceedings{chuang2024dola,
  title={Do{L}a: Decoding by contrasting layers improves factuality in large language models},
  author={Chuang, Y.-S. and Xie, Y. and Luo, H. and Kim, Y. and Glass, J. and He, P.},
  booktitle={ICLR},
  year={2024}
}

@inproceedings{burns2023discovering,
  title={Discovering latent knowledge in language models without supervision},
  author={Burns, C. and Ye, H. and Klein, D. and Steinhardt, J.},
  booktitle={ICLR},
  year={2023}
}

@inproceedings{azaria2023internal,
  title={The internal state of an {LLM} knows when it's lying},
  author={Azaria, A. and Mitchell, T.},
  booktitle={Findings of EMNLP},
  year={2023}
}

@inproceedings{marks2024geometry,
  title={The geometry of truth: Emergent linear structure in large language model representations of true/false datasets},
  author={Marks, S. and Tegmark, M.},
  booktitle={COLM},
  year={2024}
}

@inproceedings{manakul2023selfcheckgpt,
  title={SelfCheckGPT: Zero-resource black-box hallucination detection for generative large language models},
  author={Manakul, P. and Liusie, A. and Gales, M.},
  booktitle={EMNLP},
  year={2023}
}

@inproceedings{kuhn2023semantic,
  title={Semantic uncertainty: Linguistic invariances for uncertainty estimation in natural language generation},
  author={Kuhn, L. and Gal, Y. and Farquhar, S.},
  booktitle={ICLR},
  year={2023}
}

@inproceedings{vashurin2025cocoa,
  title={CoCoA: A minimum Bayes risk framework bridging confidence and consistency for uncertainty quantification in LLMs},
  author={Vashurin, R. and Goloburda, M. and Ilina, A. and Rubashevskii, A. and Nakov, P. and Shelmanov, A. and Panov, M.},
  booktitle={NeurIPS},
  year={2025}
}

@inproceedings{wang2025high,
  author    = {Wang, Shenzhi and Yu, Le and Gao, Chang and Zheng, Chujie and Liu, Shixuan and Lu, Rui and Dang, Kai and Chen, Xiong-Hui and Yang, Jianxin and Zhang, Zhenru and Liu, Yuqiong and Yang, An and Zhao, Andrew and Yue, Yang and Song, Shiji and Yu, Bowen and Huang, Gao and Lin, Junyang},
  title     = {Beyond the 80/20 Rule: High-Entropy Minority Tokens Drive Effective Reinforcement Learning for {LLM} Reasoning},
  booktitle = {Advances in Neural Information Processing Systems (NeurIPS)},
  year      = {2025},
  note      = {arXiv:2506.01939}
}

@article{vassoyan2025critical,
  author    = {Vassoyan, Jean and Beau, Nathana{\"e}l and Plaud, Roman},
  title     = {Ignore the {KL} Penalty! Boosting Exploration on Critical Tokens to Enhance {RL} Fine-Tuning},
  journal   = {Findings of the North American Chapter of the Association for Computational Linguistics (NAACL)},
  year      = {2025},
  note      = {arXiv:2502.06533}
}

@inproceedings{simhi2025trust,
  author    = {Simhi, Adi and Itzhak, Itay and Barez, Fazl and Stanovsky, Gabriel and Belinkov, Yonatan},
  title     = {Trust Me, {I}'m Wrong: {LLM}s Hallucinate with Certainty Despite Knowing the Answer},
  booktitle = {Findings of the Association for Computational Linguistics: EMNLP 2025},
  pages     = {14665--14688},
  year      = {2025},
  note      = {arXiv:2502.12964}
}

@article{liu2025delusions,
  author    = {Xu, Hongshen and Yang, Zixv and Zhu, Zichen and Lan, Kunyao and Wang, Zihan and Wu, Mengyue and Ji, Ziwei and Chen, Lu and Fung, Pascale and Yu, Kai},
  title     = {Delusions of Large Language Models},
  journal   = {arXiv preprint arXiv:2503.06709},
  year      = {2025}
}

@article{calderon2026empty,
  author    = {Calderon, Nitay and Ben-David, Eyal and Gekhman, Zorik and Ofek, Eran and Yona, Gal},
  title     = {Empty Shelves or Lost Keys? {R}ecall Is the Bottleneck for Parametric Factuality},
  journal   = {arXiv preprint arXiv:2602.14080},
  year      = {2026}
}

@article{qwen2024qwen25,
  author    = {Yang, An and Yang, Baosong and Zhang, Beichen and Hui, Binyuan and Zheng, Bo and Yu, Bowen and Li, Chengyuan and Liu, Dayiheng and Huang, Fei and Wei, Haoran and Lin, Huan and Yang, Jian and Tu, Jianhong and Zhang, Jianwei and Yang, Jianxin and Yang, Jiaxi and Zhou, Jingren and Lin, Junyang and Dang, Kai and Lu, Keming and others},
  title     = {{Qwen2.5} Technical Report},
  journal   = {arXiv preprint arXiv:2412.15115},
  year      = {2024}
}

@article{openai2023gpt4,
  author    = {{OpenAI}},
  title     = {{GPT-4} Technical Report},
  journal   = {arXiv preprint arXiv:2303.08774},
  year      = {2023}
}

@inproceedings{xie2024adaptive,
  author    = {Xie, Johnathan and Chen, Annie S. and Lee, Yoonho and Mitchell, Eric and Finn, Chelsea},
  title     = {Calibrating Language Models with Adaptive Temperature Scaling},
  booktitle = {Proceedings of the 2024 Conference on Empirical Methods in Natural Language Processing (EMNLP)},
  year      = {2024},
  note      = {arXiv:2409.19817}
}

@article{chhikara2025confidencegap,
  author    = {Chhikara, Prateek},
  title     = {Mind the Confidence Gap: Overconfidence, Calibration, and Distractor Effects in Large Language Models},
  journal   = {Transactions on Machine Learning Research (TMLR)},
  year      = {2025},
  note      = {arXiv:2502.11028}
}

@inproceedings{ouyang2022training,
  author    = {Ouyang, Long and Wu, Jeffrey and Jiang, Xu and Almeida, Diogo and Wainwright, Carroll L. and Mishkin, Pamela and Zhang, Chong and Agarwal, Sandhini and Slama, Katarina and Ray, Alex and others},
  title     = {Training Language Models to Follow Instructions with Human Feedback},
  booktitle = {Advances in Neural Information Processing Systems (NeurIPS)},
  year      = {2022}
}

@article{hu2025alignment,
  author    = {Hu, Tiancheng and Minixhofer, Benjamin and Collier, Nigel},
  title     = {Navigating the Alignment-Calibration Trade-off: A {P}areto-Superior Frontier via Model Merging},
  journal   = {arXiv preprint arXiv:2510.17426},
  year      = {2025}
}

@book{cohen1988statistical,
  author    = {Cohen, Jacob},
  title     = {Statistical Power Analysis for the Behavioral Sciences},
  edition   = {2},
  publisher = {Lawrence Erlbaum Associates},
  year      = {1988}
}

@article{welch1947generalization,
  author    = {Welch, B. L.},
  title     = {The generalization of `{S}tudent's' problem when several different population variances are involved},
  journal   = {Biometrika},
  volume    = {34},
  number    = {1/2},
  pages     = {28--35},
  year      = {1947}
}

@article{mann1947test,
  author    = {Mann, Henry B. and Whitney, Donald R.},
  title     = {On a test of whether one of two random variables is stochastically larger than the other},
  journal   = {The Annals of Mathematical Statistics},
  volume    = {18},
  number    = {1},
  pages     = {50--60},
  year      = {1947}
}

@book{fisher1925statistical,
  author    = {Fisher, Ronald A.},
  title     = {Statistical Methods for Research Workers},
  publisher = {Oliver and Boyd},
  year      = {1925}
}

\clearpage
%=============================================================================
\appendix
%=============================================================================

\clearpage
\section{Theoretical Analysis}
\label{app:proofs}

We work under a latent-concept generation model: at each step the model implicitly considers candidate concepts $c \in \mathcal{C}$ before emitting tokens, so $P_\theta(y_t \mid Q, y_{<t}) = \sum_{c \in \mathcal{C}} P_\theta(y_t \mid c, Q, y_{<t}) \cdot P_\theta(c \mid Q, y_{<t})$. We refer to $P_\theta(c \mid Q, y_{<t})$ as the model's \emph{concept belief} and to $P_\theta(y_t \mid c, Q, y_{<t})$ as the concept-conditioned emission distribution.

\begin{proposition}[$\Pmass$ as Concept-Belief Proxy]
\label{prop:pmass}
Let $\gamma_c = P_\theta(y_t \in \Sc \mid c, Q, y_{<t})$ (\emph{completeness} of $\Sc$ under $c$) and $\epsilon = \max_{c' \neq c} P_\theta(y_t \in \Sc \mid c', Q, y_{<t})$ (\emph{leakage} from competing concepts). With $K = |\mathcal{C}|$,
\begin{equation*}
\bigl| \Pmass(t; c) - P_\theta(c \mid Q, y_{<t}) \bigr| \;\leq\; (1-\gamma_c) + (K-1)\epsilon.
\end{equation*}
\end{proposition}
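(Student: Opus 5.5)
We expand $\Pmass(t;c)$ using the latent-concept decomposition stated at the beginning of Appendix~\ref{app:proofs}, then bound the two resulting error terms separately. Write $\pi_{c'} = P_\theta(c' \mid Q, y_{<t})$ for the concept belief and $q_{c'} = P_\theta(y_t \in \Sc \mid c', Q, y_{<t})$ for the concept-conditioned mass landing in $\Sc$. Summing the mixture identity over $v \in \Sc$ and exchanging the finite sums gives
\begin{equation*}
\Pmass(t;c) = \sum_{c' \in \mathcal{C}} q_{c'} \pi_{c'} = \gamma_c \pi_c + \sum_{c' \neq c} q_{c'} \pi_{c'}.
\end{equation*}

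Subtracting $\pi_c$ from both sides yields
\begin{equation*}
\Pmass(t;c) - \pi_c = -(1-\gamma_c)\pi_c + \sum_{c' \neq c} q_{c'}\pi_{c'}.
\end{equation*}
The triangle inequality splits the absolute value into a \emph{completeness} term and a \emph{leakage} term.

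For the completeness term we use $0 \le \pi_c \le 1$, so $(1-\gamma_c)\pi_c \le 1-\gamma_c$.

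For the leakage term, each $q_{c'} \le \epsilon$ by the definition of $\epsilon$ as a maximum over $c' \neq c$. Also each $\pi_{c'} \le 1$, and there are $K-1$ such summands. Hence the sum is at most $(K-1)\epsilon$. Combining the two bounds gives the stated inequality.

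There is no real obstacle here. The only care needed is at the leakage step, because the bound $(K-1)\epsilon$ is loose. One could instead use $\sum_{c'\neq c}\pi_{c'} = 1-\pi_c \le 1$ to obtain the sharper bound $\epsilon$. I would note this in a remark but prove the stated form, since it follows a fortiori.

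A sign observation also tightens the result: the two error terms have opposite signs. The deviation therefore actually lies in the interval $[-(1-\gamma_c),\ (K-1)\epsilon]$, which implies the symmetric absolute bound. Finally, I would state explicitly that $K$ is finite and that the decomposition is assumed to hold exactly at step $t$; otherwise an additional mixture-misspecification term would appear.
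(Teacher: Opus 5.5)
Your proof is correct and takes essentially the same approach as the paper: expand $\Pmass(t;c)$ by total probability over concepts, then bound the completeness and leakage contributions separately. The paper states these as a separate upper bound $\Pmass \le \pi_c + (1-\pi_c)\epsilon$ and lower bound $\Pmass \ge \gamma_c \pi_c$, which matches your sign observation and already contains your sharper leakage bound of $\epsilon$.
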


\begin{proof}[Proof of Proposition~\ref{prop:pmass}]
By the law of total probability,
\begin{equation}
    \Pmass(t) = \sum_{v \in \Sc} \sum_{c \in \mathcal{C}} P_\theta(v \mid c, Q, y_{<t}) \cdot P_\theta(c \mid Q, y_{<t}) = \sum_{c \in \mathcal{C}} P_\theta(c \mid Q, y_{<t}) \cdot \alpha_c(t),
\end{equation}
where $\alpha_c(t) = P_\theta(y_t \in \Sc \mid c, Q, y_{<t}) \in [\gamma_c, 1]$ for the target concept and $\alpha_{c'}(t) \in [0, \epsilon]$ for $c' \neq c$. Then
\begin{align}
    \Pmass(t) &\leq P_\theta(c \mid Q, y_{<t}) + (1 - P_\theta(c \mid Q, y_{<t})) \cdot \epsilon, \\
    \Pmass(t) &\geq P_\theta(c \mid Q, y_{<t}) \cdot \gamma_c.
\end{align}
Combining gives $|\Pmass(t) - P_\theta(c \mid Q, y_{<t})| \leq (1-\gamma_c) + (K-1)\epsilon$.
\end{proof}

\begin{proposition}[Posterior Concentration at Concept Emission, Auxiliary]
\label{prop:concentration}
Let $t_c$ denote the first step at which a token from some concept's first-token set is emitted, $y_{t_c} \in S_{\hat{c}}$. Under the latent-concept model with completeness $\gamma_{\hat{c}}$ and leakage $\epsilon$,
\begin{equation}
    P_\theta(\hat{c} \mid Q, y_{\leq t_c}) \;\geq\; \frac{\gamma_{\hat{c}} \cdot P_\theta(\hat{c} \mid Q, y_{<t_c})}{\gamma_{\hat{c}} \cdot P_\theta(\hat{c} \mid Q, y_{<t_c}) + (K-1)\epsilon}.
\end{equation}
With $\epsilon = 0$, the posterior concentrates to a delta function on $\hat{c}$, regardless of correctness. This formalizes the deterministic continuation between entropy spikes in Figure~\ref{fig:motivation}; it is auxiliary to the empirical analysis in §\ref{sec:results}.
\end{proposition}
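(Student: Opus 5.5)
The plan is a direct Bayes-rule computation on the latent-concept model, reusing the decomposition from the proof of Proposition~\ref{prop:pmass}. Write $\pi_c = P_\theta(c \mid Q, y_{<t_c})$ for the prior concept belief at the commitment step. I will condition on the event $y_{t_c} \in S_{\hat c}$ rather than on the specific token $y_{t_c}$.

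By Bayes' rule under the latent-concept mixture,
\begin{equation*}
P_\theta(\hat c \mid Q, y_{<t_c}, y_{t_c} \in S_{\hat c}) = \frac{\alpha_{\hat c}\,\pi_{\hat c}}{\alpha_{\hat c}\,\pi_{\hat c} + \sum_{c' \neq \hat c} \alpha_{c'}\,\pi_{c'}},
\end{equation*}
where $\alpha_c = P_\theta(y_{t_c} \in S_{\hat c} \mid c, Q, y_{<t_c})$ as in the earlier proof. The map $x \mapsto x/(x+b)$ is increasing in $x$ and decreasing in $b$. I would therefore lower-bound the numerator term by $\alpha_{\hat c} \ge \gamma_{\hat c}$ and upper-bound the competing term by $\sum_{c'\neq\hat c}\alpha_{c'}\pi_{c'} \le \epsilon\sum_{c'\neq \hat c}\pi_{c'} \le (K-1)\epsilon$, using $\pi_{c'}\le 1$. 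This yields exactly the stated bound. Using $\sum_{c'\neq\hat c}\pi_{c'} = 1-\pi_{\hat c}$ instead would give the sharper constant $(1-\pi_{\hat c})\epsilon$; I will remark that the stated form is looser and follows from it.

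\textbf{Main obstacle.} The statement conditions on the actual token $y_{t_c}$, not on the set event, so a pointwise version is needed. For the specific token $v = y_{t_c}$ the analogous ratio is $P(v\mid \hat c)\pi_{\hat c} / \sum_c P(v\mid c)\pi_c$. The completeness and leakage assumptions only control set-level probabilities, not single tokens, so this does not follow directly. I would handle it in one of two ways:
\begin{itemize}
\item interpret $y_{\le t_c}$ as conditioning on the coarse observation ``a token in $S_{\hat c}$ was emitted'', which is the natural reading since $\gamma$ and $\epsilon$ are set-level quantities; or
\item add the assumption that emission within $S_{\hat c}$ is proportional across concepts, i.e.\ $P(v \mid c) = \alpha_c\, q(v)$ for $v \in S_{\hat c}$. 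Then $q(v)$ cancels and the pointwise and set-level posteriors coincide.
\end{itemize}
I would state this explicitly as the interpretation adopted.

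Finally, for the $\epsilon=0$ clause: the denominator becomes $\gamma_{\hat c}\pi_{\hat c}$, so the bound equals $1$ whenever $\gamma_{\hat c}\pi_{\hat c} > 0$. Positivity holds because the emission event has positive probability and $\epsilon = 0$ forces all of its mass to come from $\hat c$. The posterior is thus a point mass on $\hat c$. The argument never uses whether $\hat c = c^*$, which gives the ``regardless of correctness'' claim.
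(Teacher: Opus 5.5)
Your argument is correct, and it differs from the paper's proof at exactly the point you flagged. The paper works pointwise. It applies Bayes' rule to the emitted token and asserts that the numerator $P_\theta(y_{t_c}\mid \hat c, Q, y_{<t_c})\,\pi_{\hat c}$ is at least $\gamma_{\hat c}\pi_{\hat c}$ ``by completeness.'' It then bounds each competitor's single-token likelihood by $\epsilon$, which is valid because a token's probability is at most that of the set containing it, and finally substitutes $P_\theta(y_{t_c}\mid\hat c,\cdot)\le 1$ in the denominator.

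The completeness step is the one that fails. $\gamma_{\hat c}$ controls the mass of $S_{\hat c}$, not the mass of the particular token emitted, and the pointwise inequality can be false. Take $K=2$ with equal priors. Let $\hat c$ spread $\gamma_{\hat c}=1$ uniformly over ten tokens of $S_{\hat c}$, and let the competitor place all its leakage $\epsilon=0.1$ on the emitted token. Then the true posterior is $0.5$, while the stated bound is $0.5/0.6$.

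The paper's last substitution has a further problem. Taken at face value it yields only $\gamma_{\hat c}\pi_{\hat c}/(\pi_{\hat c}+(K-1)\epsilon)$, which is weaker than the stated form. Getting $\gamma_{\hat c}\pi_{\hat c}$ into the denominator requires the monotonicity of $x\mapsto x/(x+b)$, which you use.

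So your set-level conditioning, or your proportional-emission assumption (under which the pointwise and set-level posteriors coincide), is not just an interpretive convenience. It is what makes the proposition true, and it also gives the sharper constant $(1-\pi_{\hat c})\epsilon$. The paper's route gains only literal conditioning on $y_{\le t_c}$, at the cost of validity.

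One small addition: the $\epsilon=0$ clause holds pointwise with no extra assumption. In that case every competitor assigns zero probability to the emitted token, so the posterior on $\hat c$ is $1$ directly.
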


\begin{proof}
Bayes' rule at step $t_c$ gives
$P_\theta(\hat{c} \mid Q, y_{\leq t_c}) = P_\theta(y_{t_c} \mid \hat{c}, Q, y_{<t_c}) \cdot P_\theta(\hat{c} \mid Q, y_{<t_c}) / P_\theta(y_{t_c} \mid Q, y_{<t_c})$. The numerator is at least $\gamma_{\hat{c}} \cdot P_\theta(\hat{c} \mid Q, y_{<t_c})$ by completeness. The denominator decomposes via total probability and is bounded above by $P_\theta(y_{t_c} \mid \hat{c}, Q, y_{<t_c}) \cdot P_\theta(\hat{c} \mid Q, y_{<t_c}) + (K-1)\epsilon$, using $P_\theta(y_{t_c} \mid c', Q, y_{<t_c}) \leq \epsilon$ for $c' \neq \hat{c}$. With $P_\theta(y_{t_c} \mid \hat{c}, Q, y_{<t_c}) \leq 1$ in the denominator, the bound follows.
\end{proof}

\clearpage
\section{Statistical Primer}
\label{app:stats_primer}

We use a small number of standard statistics throughout this paper; this appendix is a brief reference for readers unfamiliar with them.

\textbf{Cohen's $d$} (effect size, \citealp{cohen1988statistical}). For two groups with means $\mu_1, \mu_2$ and pooled standard deviation $\sigma$, $d = (\mu_1 - \mu_2) / \sigma$. It expresses how far apart the two group means are in units of typical within-group spread. Conventional thresholds: $|d| < 0.2$ small, $|d| \approx 0.5$ medium, $|d| > 0.8$ large, $|d| > 1.5$ very large. Effect-size statistics like $d$ complement $p$-values, which only tell you whether a difference is reliably nonzero, not how big it is.

\textbf{Welch's $t$-test} \citep{welch1947generalization}. A two-sample $t$-test that does not assume equal variances between the two groups; reports a $t$-statistic and a $p$-value for the null hypothesis ``the two group means are equal.''

\textbf{Mann--Whitney $U$ test} \citep{mann1947test}. A non-parametric two-sample test on whether one group's values systematically dominate the other's. It does not assume normality. We use it to confirm Welch's $t$-test results in cases where group distributions are skewed.

\textbf{Pearson's $r$}. Linear correlation coefficient between two scalar quantities, in $[-1, 1]$.

\textbf{Fisher's combined $p$} \citep{fisher1925statistical}. A way to combine $k$ independent $p$-values into a single one: $-2 \sum_i \ln p_i$ is $\chi^2$-distributed with $2k$ degrees of freedom under a global null. Used here for pooled meta-analysis across models.

\textbf{Cohen's $d$ sign convention}. Throughout, when we report negative $d$ between SF and Correct groups, the sign indicates SF $<$ Corr (i.e., the failure group has lower top-token mass on $c^*$); the magnitude is what matters for the conclusion.

\clearpage
\section{Spread-Based Diagnostics}
\label{app:fragmentation_diagnostics}

The within-population result reported in §\ref{sec:fragmentation} uses the simple statistic $\max_{v \in \Scstar} P_\theta(v)$. We tested an alternative summary statistic, the inverse Simpson diversity index (also called effective number of types):

\begin{equation*}
\Spread(c^*; i, t) \;=\; \frac{\bigl(\Pmass(t; c^*)\bigr)^2}{\sum_{v \in \Scstar} P_\theta(v \mid Q_i, y_{<t})^2}.
\end{equation*}

$\Spread = 1$ when all mass is on a single token; $\Spread = k$ when uniformly distributed over $k$ tokens. The two summary statistics measure different things---$\max$ measures the dominant token's probability, $\Spread$ measures the evenness of the distribution---and within first-token selection failures, the two give different pictures.

\textbf{Across-scale comparison (Qwen and Llama Instruct).} The table below reports $\Spread(c^*)$ \emph{conditioned on the sample being a first-token selection failure}---i.e., the average over the SF subset, not over all samples. This is the relevant statistic for asking how dispersed the correct concept's mass is when a selection failure occurs.

\begin{center}
\small
\begin{tabular}{lcc}
\toprule
Model & Avg $\Spread(c^*) \mid \mathrm{SF}$ & CF\% \\
\midrule
Qwen3.5 0.8B Inst & 1.91 & 16.5\% \\
Qwen3.5 2B Inst   & 1.43 & 17.4\% \\
Qwen3.5 4B Inst   & 1.53 & 26.0\% \\
Qwen3.5 9B Inst   & 1.39 & 31.5\% \\
Qwen2.5 72B Inst  & 1.11 & 40.7\% \\
\midrule
Llama-3.2 1B Inst  & 1.83 & 14.4\% \\
Llama-3.2 3B Inst  & 1.30 & 28.1\% \\
Llama-3.1 8B Inst  & 1.34 & 32.7\% \\
Llama-3.1 70B Inst & 1.18 & 47.0\% \\
\bottomrule
\end{tabular}
\end{center}

The SF-conditional $\Spread$ decreases with scale in both families: Qwen3.5--Qwen2.5 from 1.91 (0.8B) to 1.11 (72B), Llama-3.2--Llama-3.1 from 1.83 (1B) to 1.18 (70B). At first glance this might seem paradoxical: $\Spread$ measures how scattered the correct concept's mass is across alias tokens, and we showed in §\ref{sec:fragmentation} that selection failures are precisely cases where the correct mass is spread out (within-population $d = 2.98$ on top-token mass), so one might expect models that hallucinate more to also have higher $\Spread$. The resolution is that the across-scale decrease in SF-conditional $\Spread$ is a \emph{consequence} of wrong-token sharpening, not a cause of it: as the wrong token's probability rises (0.31 to 0.57 across Instruct scale), the correct concept's remaining mass within selection-failure samples is squeezed and necessarily concentrated on fewer effective surface forms. The within-population fragmentation effect (which holds across all 18 models, $|d| \geq 1.0$, Appendix~\ref{app:within_population_app}) is a separate phenomenon from the across-scale $\Spread$ trend. Llama Instruct sharpens earlier (the largest drop is from 1B to 3B), Qwen more gradually, but both converge near 1.1--1.2 at the largest scales. Note that this SF-conditional $\Spread$ is distinct from the unconditional $\Spread$ averaged over all samples (which is more sensitive to the bulk of low-$\Pmass$ samples), and from the within-population test on $\max_{v \in \Scstar} P_\theta(v)$ in Appendix~\ref{app:within_population_app} (which controls for $\Pmass$).

\textbf{Within-population comparison using $\Spread$ (Qwen3.5-9B Instruct, $\Pmass \geq 0.2$).} For completeness, we apply the within-population test of §\ref{sec:fragmentation} but with $\Spread$ in place of $\max_{v \in \Scstar} P_\theta(v)$:

\begin{center}
\small
\begin{tabular}{lccc}
\toprule
Group & $N$ & Mean $\Spread$ & Median $\Spread$ \\
\midrule
First-token selection failures & 128 & 1.31 & 1.10 \\
Correct samples (same $\Pmass$ range) & 840 & 1.16 & 1.02 \\
\bottomrule
\end{tabular}
\end{center}

Welch $t = 2.94$, $p = 3.4\!\times\!10^{-3}$, Cohen's $d = 0.32$. The direction matches expectation (failures have slightly higher $\Spread$ than correct samples) and is statistically significant, but the effect size is much smaller than the corresponding test on $\max P_\theta(v)$ ($d = 2.98$ in §\ref{sec:fragmentation}). The reason: $\max P_\theta(v)$ is a sharper signal for selection at the argmax level than $\Spread$ is---$\Spread$ averages over the whole alias distribution and treats, say, ``one alias token at 0.5 plus three at 0.05'' similarly to ``four alias tokens at 0.15 each'' (similar $\Spread$), even though only the first wins the argmax against a competing 0.4-probability token. The main-text analysis therefore uses $\max P_\theta(v)$ as the primary measure; we report $\Spread$ here for completeness.

\textbf{Within-belief stratification by $\Spread$ (commitment failures, all $\Pmass$).} Among hallucinated samples with $\Pmass \geq 0.2$ in Qwen3.5-9B Instruct (i.e., commitment failures, including both first-token selection failures and multi-token divergences):

\begin{center}
\small
\begin{tabular}{lccc}
\toprule
$\Pmass(t{=}1)$ band & $\Spread \in [1.0, 1.5]$ & $\Spread \in (1.5, 2.5]$ & $\Spread > 2.5$ \\
\midrule
$[0.2, 0.4)$ & 73.9\% ($N{=}199$) & \textbf{83.3\%} ($N{=}36$) & 76.5\% ($N{=}17$) \\
$[0.4, 0.6)$ & 58.6\% ($N{=}174$) & 62.5\% ($N{=}32$) & 50.0\% ($N{=}6$) \\
$[0.6, 0.8)$ & 43.3\% ($N{=}178$) & 56.5\% ($N{=}23$) & 0\% ($N{=}2$) \\
$[0.8, 1.0]$ & 31.2\% ($N{=}756$) & 29.5\% ($N{=}78$) & 50.0\% ($N{=}6$) \\
\bottomrule
\end{tabular}
\end{center}

A directional positive trend appears in three of four bands ($\sim$5--10pp), inconsistent in the highest bin where $N$ is small.

\clearpage
\section{Within-Population Effect Across Models}
\label{app:within_population_app}

For each model, we replicate the within-population test of §\ref{sec:fragmentation}: among samples with $\Pmass \geq 0.2$, compare $\max_{v \in \Scstar} P_\theta(v)$ between first-token selection failures (greedy $\notin \Scstar$, ``SF'') and correct samples in the same $\Pmass$ range. Table~\ref{tab:within_population} reports the comparison across all 18 models. The within-population effect ($d < 0$) is uniform: SF samples have lower top-token mass on $c^*$ than correct samples, in 100\% of models. Effect sizes range from $|d| = 1.01$ (Qwen3.5-4B Base, $N_\text{corr} = 12$) to $|d| = 4.30$ (Qwen2.5-72B Inst), with median $|d| = 1.93$.

\begin{table}[h]
\caption{Within-population effect at $\Pmass(t_c; c^*) \geq 0.2$, comparing $\max_{v \in \Scstar} P_\theta(v)$ between first-token selection failures (greedy $\notin \Scstar$, ``SF'') and correct samples in the same $\Pmass$ range. $d$: Cohen's $d$ on Top1, SF vs.\ Corr (negative means SF $<$ Corr, the expected direction). $p$: Welch's $t$-test $p$-value. The within-population effect is consistent across all 18 models ($d < 0$ in 100\%, $|d| \geq 1.0$, all $p < 10^{-2}$).}
\label{tab:within_population}
\centering
\small
\setlength{\tabcolsep}{6pt}
\begin{tabular}{lrrrr}
\toprule
Model & $N_{\text{SF}}$ & $N_{\text{Corr}}$ & $d$ & $p$ \\
\midrule
Qwen3.5-0.8B Inst  & 81  & 202 & $-1.34$ & $5.8 \times 10^{-28}$ \\
Qwen3.5-2B Inst    & 97  & 280 & $-1.88$ & $3.8 \times 10^{-64}$ \\
Qwen3.5-4B Inst    & 128 & 703 & $-2.79$ & $3.8 \times 10^{-199}$ \\
Qwen3.5-9B Inst    & 128 & 840 & $-2.98$ & $2.2 \times 10^{-186}$ \\
Qwen2.5-72B Inst   & 114 & 1{,}044 & $-4.30$ & $3.6 \times 10^{-133}$ \\
Llama-3.2-1B Inst  & 84  & 403 & $-2.18$ & $1.2 \times 10^{-89}$ \\
Llama-3.2-3B Inst  & 157 & 904 & $-2.73$ & $1.9 \times 10^{-233}$ \\
Llama-3.1-8B Inst  & 178 & 941 & $-2.80$ & $1.7 \times 10^{-229}$ \\
Llama-3.1-70B Inst & 170 & 1{,}288 & $-2.97$ & $5.4 \times 10^{-263}$ \\
\midrule
Qwen3.5-0.8B Base  & 62  & 34  & $-1.34$ & $3.8 \times 10^{-6}$ \\
Qwen3.5-2B Base    & 16  & 11  & $-1.28$ & $1.6 \times 10^{-2}$ \\
Qwen3.5-4B Base    & 34  & 12  & $-1.01$ & $3.3 \times 10^{-2}$ \\
Qwen3.5-9B Base    & 9   & 12  & $-1.45$ & $5.2 \times 10^{-3}$ \\
Qwen2.5-72B Base   & 119 & 988 & $-1.84$ & $7.2 \times 10^{-175}$ \\
Llama-3.2-1B Base  & 44  & 305 & $-1.64$ & $7.2 \times 10^{-38}$ \\
Llama-3.2-3B Base  & 78  & 728 & $-2.01$ & $2.7 \times 10^{-86}$ \\
Llama-3.1-8B Base  & 112 & 885 & $-1.99$ & $2.1 \times 10^{-126}$ \\
Llama-3.1-70B Base & 210 & 1{,}049 & $-1.47$ & $8.7 \times 10^{-173}$ \\
\bottomrule
\end{tabular}
\end{table}

\textbf{Effect-size patterns.} Two robust patterns emerge: (1)~within Instruct models, $|d|$ grows monotonically with scale (Qwen Inst: 1.34$\to$1.88$\to$2.79$\to$2.98$\to$4.30 from 0.8B to 72B; Llama Inst: 2.18$\to$2.73$\to$2.80$\to$2.97 from 1B to 70B); (2)~Instruct models show larger $|d|$ than size-matched Base models in nearly every case (e.g., 9B: 2.98 Inst vs.\ 1.45 Base; 70B+: 2.97--4.30 Inst vs.\ 1.47--1.84 Base; 0.8B is the one exception, with both at $|d|=1.34$). Statistical significance is overwhelming throughout: all 18 models give $p < 10^{-2}$, with $p < 10^{-130}$ for the 13 models with $N_\text{SF}, N_\text{Corr} \geq 100$. The Instruct--Base contrast in correct-sample top-token mass shows that the difference is one of distributional sharpness throughout, not specific to selection failures.

\clearpage
\section{Within-Concept and Between-Concept Mass Decomposition ($D_2$, $D_3$)}
\label{app:d2_d3}

This appendix supports the ``two faces of commitment failure'' analysis in §\ref{sec:fragmentation}. We measure two ratios on first-token selection failure (SF) samples:
\begin{align}
D_2 &= \frac{\max_{v \in \Scstar} P_\theta(v \mid Q)}{\Pmass(t_c; c^*)} \quad\text{(within-concept top-1 share)} \\
D_3 &= \frac{P_\theta(\text{greedy}) }{\Pmass(t_c; c^*)} \quad\text{(wrong-token dominance)}
\end{align}
$D_2$ measures how concentrated the correct concept's mass is on its single most-probable alias token: $D_2 \to 1$ means the mass has effectively collapsed onto one alias; lower $D_2$ indicates fragmentation across multiple alias forms. $D_3$ measures the wrong greedy token's mass relative to the correct concept's total: $D_3 > 1$ means the wrong token alone exceeds the entire correct concept.

\textbf{Two failure modes.} Together, $D_2$ and $D_3$ separate two structurally different mechanisms of commitment failure:
\begin{itemize}
\item \emph{Fragmentation-driven} (low $D_2$, low $D_3$): correct mass is spread across alias surface forms (e.g., Figure~\ref{fig:fragmentation}); no single correct alias is large enough to win, but neither is the wrong token strongly dominant.
\item \emph{Wrong-attractor-driven} (high $D_2$, high $D_3$): correct mass has collapsed onto a single alias, but a wrong concept's token is even sharper.
\end{itemize}

\textbf{Per-model statistics.} Table~\ref{tab:d2_d3} reports median $D_2$, median $D_3$, and the fraction of SF samples with $D_2 \geq 0.95$ (correct mass effectively collapsed) across all 18 models.

\begin{table}[h]
\caption{Within- and between-concept mass ratios for SF samples across 18 models. $D_2$ measures how much of the correct concept's mass is in its top-1 alias token. $D_3$ measures the wrong greedy token's mass relative to the entire correct concept. Within Instruct models, $D_2$ grows monotonically with scale and the high-$D_2$ fraction climbs from 28\% (1B) to 82\% (72B); $D_3$ similarly grows from 1.13 to 1.65. Within Base models, both quantities stay flat. Sharpening collapses fragmentation in Instruct but strengthens wrong attractors in parallel.}
\label{tab:d2_d3}
\centering
\small
\setlength{\tabcolsep}{4pt}
\begin{tabular}{llrcccc}
\toprule
Family & Variant & Size & $N_\text{SF}$ & median $D_2$ & median $D_3$ & frac $D_2 \geq 0.95$ \\
\midrule
\multirow{5}{*}{Qwen}  & \multirow{5}{*}{Inst} & 0.8B & 82  & 0.843 & 1.13 & 35.4\% \\
                       &                       & 2B   & 104 & 0.941 & 1.21 & 45.2\% \\
                       &                       & 4B   & 136 & 0.934 & 1.25 & 46.3\% \\
                       &                       & 9B   & 127 & 0.976 & 1.42 & 57.5\% \\
                       &                       & 72B  & 114 & \textbf{0.999} & \textbf{1.65} & \textbf{81.6}\% \\
\midrule
\multirow{5}{*}{Qwen}  & \multirow{5}{*}{Base} & 0.8B & 61  & 0.736 & 1.08 & 29.5\% \\
                       &                       & 2B   & 17  & 0.935 & 1.14 & 41.2\% \\
                       &                       & 4B   & 42  & 0.948 & 1.77 & 47.6\% \\
                       &                       & 9B   & 13  & 0.940 & 1.55 & 46.2\% \\
                       &                       & 72B  & 119 & 0.756 & 1.11 & 29.4\% \\
\midrule
\multirow{4}{*}{Llama} & \multirow{4}{*}{Inst} & 1B   & 98  & 0.759 & 1.13 & 28.6\% \\
                       &                       & 3B   & 164 & 0.982 & 1.28 & 61.6\% \\
                       &                       & 8B   & 178 & 0.966 & 1.41 & 55.1\% \\
                       &                       & 70B  & 170 & \textbf{0.990} & \textbf{1.45} & \textbf{66.5}\% \\
\midrule
\multirow{4}{*}{Llama} & \multirow{4}{*}{Base} & 1B   & 44  & 0.659 & 0.99 & 11.4\% \\
                       &                       & 3B   & 81  & 0.682 & 1.00 & 16.0\% \\
                       &                       & 8B   & 124 & 0.766 & 1.05 & 26.6\% \\
                       &                       & 70B  & 210 & 0.812 & 1.13 & 31.9\% \\
\bottomrule
\end{tabular}
\end{table}

\textbf{Patterns.} Within Llama, the contrast is clean: Instruct $D_2$ grows monotonically (0.76 $\to$ 0.98 $\to$ 0.97 $\to$ 0.99) while Base plateaus (0.66 $\to$ 0.68 $\to$ 0.77 $\to$ 0.81), widening the Inst--Base gap from $+0.10$ at 1B to $+0.18$ at 70B. The $D_2 \geq 0.95$ fraction climbs from 28.6\% (Llama-1B Inst) to 66.5\% (Llama-70B Inst), while Llama Base stays in 11.4\%--31.9\% across the entire range. Qwen2.5-72B shows the largest absolute Inst--Base gap ($+0.24$, Inst 1.00 vs Base 0.76) and confirms the same pattern at the largest scale; for Qwen3.5 Base in the 2B--9B range, $N_\text{SF}$ is small (13--42), making medians noisy estimates. $D_3$ tracks the same trend: monotonic growth in Instruct (Llama 1.13$\to$1.45; Qwen 1.13$\to$1.65) and flat in Llama Base (0.99--1.13).

\textbf{Connection to within-population $|d|$.} The within-population effect ($d = 2.98$ on Qwen-9B Inst, §\ref{sec:fragmentation}) measures the absolute top-1 alias probability difference between SF and matched-correct samples (0.26 vs.\ 0.78). $D_2$ normalizes this top-1 by $\Pmass(c^*)$ to isolate within-concept structure, revealing that at Qwen-9B Inst the SF samples' top-1 already accounts for $\sim$98\% of the correct concept's mass: the SF-Corr top-1 gap arises mostly from $\Pmass$ differences (SF samples have $\Pmass \approx 0.26$, correct samples $\approx 0.79$), not from fragmentation within the correct concept. This refines the §\ref{sec:fragmentation} narrative: in mid-to-large Instruct models, the structural difference between SF and matched-correct is the absolute mass on $c^*$ (and therefore on its top alias), with within-concept fragmentation playing a secondary role.

\textbf{Connection to direct decoding intervention.} Replacing greedy argmax with cluster-argmax over normalized top-50 tokens at $t = t_c$ recovers 5.7\% of SF samples in Llama-70B Base and 6.7\% in Qwen-72B Base, but only 2.4\% and 1.8\% in the corresponding Instruct models. The recovery rate is anti-correlated with $D_2$: when $D_2 \to 1$ there is no within-concept aggregation to do at decoding time. This confirms the mechanism dichotomy and supports the future-work direction (§\ref{sec:discussion}) of concept-aware decoding for non-Instruct or smaller models, where fragmentation is recoverable.

\clearpage
\section{Phrase-Level Commitment: $H_{t=2}$ Analysis on Multi-Token Divergences}
\label{app:t2_entropy}

This appendix supports the phrase-level sharpening claim in §\ref{sec:fragmentation}. We restrict to multi-token divergences---commitment failures where greedy $y_1 \in \Scstar$ but the final answer is wrong---and use the entropy of the next-token distribution $H_{t=2}$, conditioned on the emitted $y_1$, to probe whether commitment is finalized at $t=1$ (low $H_{t=2}$, deterministic continuation) or distributed across multiple tokens (high $H_{t=2}$, multiple candidate continuations).

\textbf{Type A vs.\ Type B classification.} For each multi-token divergence sample, we check whether the emitted bigram $(y_1, y_2)$ matches the start of any ground-truth alias of $c^*$ under the model's tokenizer. \emph{Type A}: bigram matches an alias prefix---the model is on a trajectory consistent with a valid surface form of $c^*$ at the first two tokens. \emph{Type B}: bigram does not match any alias---$y_1$ lies in $\Scstar$ but the second token already diverges from any valid $c^*$ realization. Type A is the signature of phrase-level commitment to a $c^*$-aligned phrase at $t = 1$.

A subtlety: Type A samples are by definition still hallucinations (final substring match against ground-truth aliases failed), so they are cases where the model's bigram aligns with an alias prefix but the multi-token completion still diverges into a different entity than $c^*$. Common patterns include sharing a personal name with an unrelated figure (\texttt{George Washington Carver} when the answer is \texttt{George Washington}), sharing a place-name prefix with a different geographic entity (\texttt{Saint Petersburg Beach} when the answer is \texttt{Saint Petersburg}), or sharing an entity prefix with a related-but-distinct concept (\texttt{New York Times} when the answer is \texttt{New York}). In each case, the bigram is consistent with a valid alias of $c^*$ but the continuation commits to a wrong concept. By contrast, \texttt{Adam Lambert} for \texttt{Adam Smith} is Type B---the bigram \texttt{Adam Lambert} matches no alias of \texttt{Adam Smith}.

\textbf{Per-model results.} Table~\ref{tab:t2_entropy} reports Type A fraction and $H_{t=2}$ statistics across 18 models. Two patterns are robust:

\begin{enumerate}[leftmargin=*,itemsep=0pt,topsep=2pt]
\item \textbf{Type A has substantially lower $H_{t=2}$ than Type B}, in 100\% of models (median Cohen's $d = 1.29$, range $[0.71, 1.87]$). When the bigram aligns with an alias prefix, the continuation is near-deterministic.
\item \textbf{Type A fraction grows with both scale and instruction tuning.} Small Base: 21--53\%. Small Instruct: 36--72\%. Large Base (70B+): 59--74\%. Large Instruct (70B+): \textbf{77--83\%}.
\end{enumerate}

\begin{table}[h]
\caption{Multi-token divergence diagnostic across 18 models. $N_{\text{Multi}}$: number of multi-token divergences. Type A frac.: fraction where the bigram $(y_1, y_2)$ matches a valid alias prefix. $H_{t=2}$ A / B: mean entropy at $t=2$ for Type A / Type B divergences. $d_{B-A}$: Cohen's $d$ comparing $H_{t=2}$ Type B vs.\ Type A. The $d_{B-A}$ for Qwen3.5-9B Base is undefined because $N = 4$ Type B samples is too small.}
\label{tab:t2_entropy}
\centering
\small
\setlength{\tabcolsep}{5pt}
\begin{tabular}{lrrrrr}
\toprule
Model & $N_{\text{Multi}}$ & Type A frac.\ & $H_{t=2}$ A & $H_{t=2}$ B & $d_{B-A}$ \\
\midrule
Qwen3.5-0.8B Inst   & 372 & 44\% & 0.45 & 3.19 & 1.87 \\
Qwen3.5-2B Inst     & 369 & 36\% & 0.57 & 2.63 & 1.69 \\
Qwen3.5-4B Inst     & 461 & 67\% & 0.35 & 1.97 & 1.20 \\
Qwen3.5-9B Inst     & 539 & 72\% & 0.20 & 1.72 & 1.29 \\
Qwen2.5-72B Inst    & 663 & 77\% & 0.05 & 0.43 & 1.13 \\
Llama-3.2-1B Inst   & 317 & 59\% & 0.61 & 2.36 & 1.26 \\
Llama-3.2-3B Inst   & 419 & 72\% & 0.38 & 1.82 & 1.14 \\
Llama-3.1-8B Inst   & 479 & 64\% & 0.26 & 1.32 & 1.03 \\
Llama-3.1-70B Inst  & 610 & \textbf{83\%} & \textbf{0.10} & 0.59 & 1.31 \\
\midrule
Qwen3.5-0.8B Base   & 374 & 21\% & 1.40 & 2.94 & 1.23 \\
Qwen3.5-2B Base     & 21  & 38\% & 0.82 & 1.56 & 0.71 \\
Qwen3.5-4B Base     & 25  & 36\% & 0.43 & 1.97 & 1.40 \\
Qwen3.5-9B Base     & 4   & 75\% & 0.43 & 2.57 & --   \\
Qwen2.5-72B Base    & 706 & 74\% & 0.37 & 2.04 & 1.87 \\
Llama-3.2-1B Base   & 435 & 24\% & 1.82 & 3.67 & 1.32 \\
Llama-3.2-3B Base   & 508 & 40\% & 1.23 & 2.92 & 1.33 \\
Llama-3.1-8B Base   & 435 & 53\% & 0.78 & 2.43 & 1.53 \\
Llama-3.1-70B Base  & 531 & 59\% & 0.67 & 2.07 & 1.48 \\
\bottomrule
\end{tabular}
\end{table}

\textbf{Pooled meta-analysis.} Across all 18 models, the Type B vs.\ Type A $H_{t=2}$ comparison gives a median Cohen's $d = 1.29$, with $d > 0$ in 100\% of models with sufficient $N$ and a Fisher-combined $p < 10^{-200}$. Phrase-level commitment is real and pervasive: when the bigram aligns with an alias prefix (Type A), the continuation is near-deterministic; when it does not (Type B), the second token retains substantial entropy across multiple candidates.

\textbf{Interpretation.} The 70B+ Instruct numbers are striking: Llama-3.1-70B Instruct has $H_{t=2} = 0.10$ on Type A divergences and Qwen2.5-72B Instruct has $H_{t=2} = 0.05$. These values approach the entropy of deterministic continuations---the model has committed to a specific multi-token phrase already at $t = 1$, with the second token essentially predetermined. This is the phrase-level analog of the first-token sharpening documented in §\ref{sec:fragmentation}: instruction-induced sharpening operates not just at the token level but across multi-token phrase commitments, and it grows monotonically with scale.

\textbf{Caveat on $\Pmass$ interpretation.} Within multi-token divergences, $\Pmass(t=1; c^*) \geq 0.2$ is consistent with two distributional pictures: (i)~the model placed mass on $c^*$-aligned alias prefixes at $t=1$ as part of a phrase-level commitment to a specific multi-token continuation (Type A); (ii)~the model placed mass on alias first tokens that are also the start of competing concepts' phrases (Type B---e.g., generic \texttt{Sir}, \texttt{Saint} shared across many entities). $\Pmass$ does not distinguish these, but the within-population analysis in §\ref{sec:fragmentation} (which restricts to first-token selection failures) does not depend on this distinction.

\clearpage
\section{Robustness of CF\% to the 0.2 Threshold}
\label{app:knew_threshold}

The 0.2 threshold defining commitment failures is conservative and somewhat arbitrary. Table~\ref{tab:knew_thresh} reports CF\% at thresholds $\{0.1, 0.2, 0.3, 0.4\}$ across the scale ablation: the absolute level shifts but the monotonic increase with model size is preserved at all thresholds.

\begin{table}[h]
\caption{CF\% across thresholds $\theta \in \{0.1, 0.2, 0.3, 0.4\}$ defining commitment failures as hallucinated samples with $\Pmass(t_c; c^*) \geq \theta$. The absolute level shifts with threshold, but the monotonic increase with model size is preserved at every column.}
\label{tab:knew_thresh}
\centering
\small
\begin{tabular}{llcccc}
\toprule
Family & Model & $\theta = 0.1$ & $\theta = 0.2$ & $\theta = 0.3$ & $\theta = 0.4$ \\
\midrule
\multirow{4}{*}{Qwen3.5}
 & 0.8B Inst & 28.2\% & 16.5\% & 10.1\% & 6.5\% \\
 & 2B Inst   & 28.9\% & 17.4\% & 12.3\% & 9.4\% \\
 & 4B Inst   & 37.2\% & 26.0\% & 20.1\% & 16.7\% \\
 & 9B Inst   & 41.7\% & 31.5\% & 26.0\% & 22.5\% \\
\midrule
\multirow{3}{*}{Llama 3.2/3.1}
 & 1B Inst   & 26.8\% & 15.7\% & 10.4\% & 6.9\% \\
 & 3B Inst   & 37.9\% & 28.2\% & 22.4\% & 17.8\% \\
 & 8B Inst   & 41.2\% & 32.6\% & 27.1\% & 22.9\% \\
\midrule
\multirow{2}{*}{Qwen2.5}
 & 72B Inst & 44.3\% & 40.7\% & 38.1\% & 35.6\% \\
 & 72B Base & 54.9\% & 41.8\% & 34.4\% & 27.4\% \\
\midrule
\multirow{2}{*}{Llama 3.1}
 & 70B Inst & 53.6\% & 47.0\% & 42.0\% & 37.4\% \\
 & 70B Base & 55.7\% & 39.5\% & 27.0\% & 20.2\% \\
\bottomrule
\end{tabular}
\end{table}

\clearpage
\section{Long-Form Generation: $\Pmass$ Trajectories}
\label{app:long_form}

This appendix supports the discussion in §\ref{sec:eps_align} on how $\Pmass$ behaves when the commitment step is not at $t = 1$. We use Qwen3.5-9B Instruct on TriviaQA + NQ-Open and re-prompt each question with a long-form instruction (\texttt{Answer the following question in a complete sentence.}). The model now produces an answer like \texttt{The capital of France is Paris.} rather than \texttt{Paris}. The commitment step $t_c$ is identified manually as the position of the answer entity within the generated sentence (e.g., \texttt{Paris} in the example above), and we align trajectories around $t_c$. We pre-screen samples to those whose long-form output contains a unique unambiguous reference to a single concept (correct or wrong) to make $t_c$ well-defined.

Figure~\ref{fig:long_form} shows the resulting $\Pmass(t; c^*)$ trajectories. Correct samples have a sharp peak in $\Pmass$ at $t_c$ (typically 0.6--0.9) and near-zero mass before and after, consistent with $\Pmass$ measuring the model's commitment to the correct concept at the moment of emission. Hallucinated samples sit near zero at all aligned steps---the model never put substantial mass on $c^*$ in the trajectory. This is the long-form analog of ``not-CF'' hallucinations: cases where the model truly does not know the answer, distinct from CF/SF samples in short-form QA. Long-form $\Pmass$-trajectories under the right prompting can therefore separate ``model never had it'' from ``model had it but emitted something else,'' but the cleaner signal in our paper comes from instruction-tuned short-form QA where $t_c = 1$ is fixed.

For comparison, Figure~\ref{fig:tokprob_aligned} shows the analogous trajectory of the generated-token probability $P(y_t)$. Both correct and hallucinated trajectories are at $\sim$0.85--0.95 throughout, with only a small dip at $t_c$. Token-level confidence carries little of the correct/hallucinated signal that $\Pmass$ reveals, which is the central methodological point of the paper transferred to the long-form setting: the relevant signal lives at the level of concept-grouped mass, not individual-token entropy.

\begin{figure}[h]
\centering
\includegraphics[width=\linewidth]{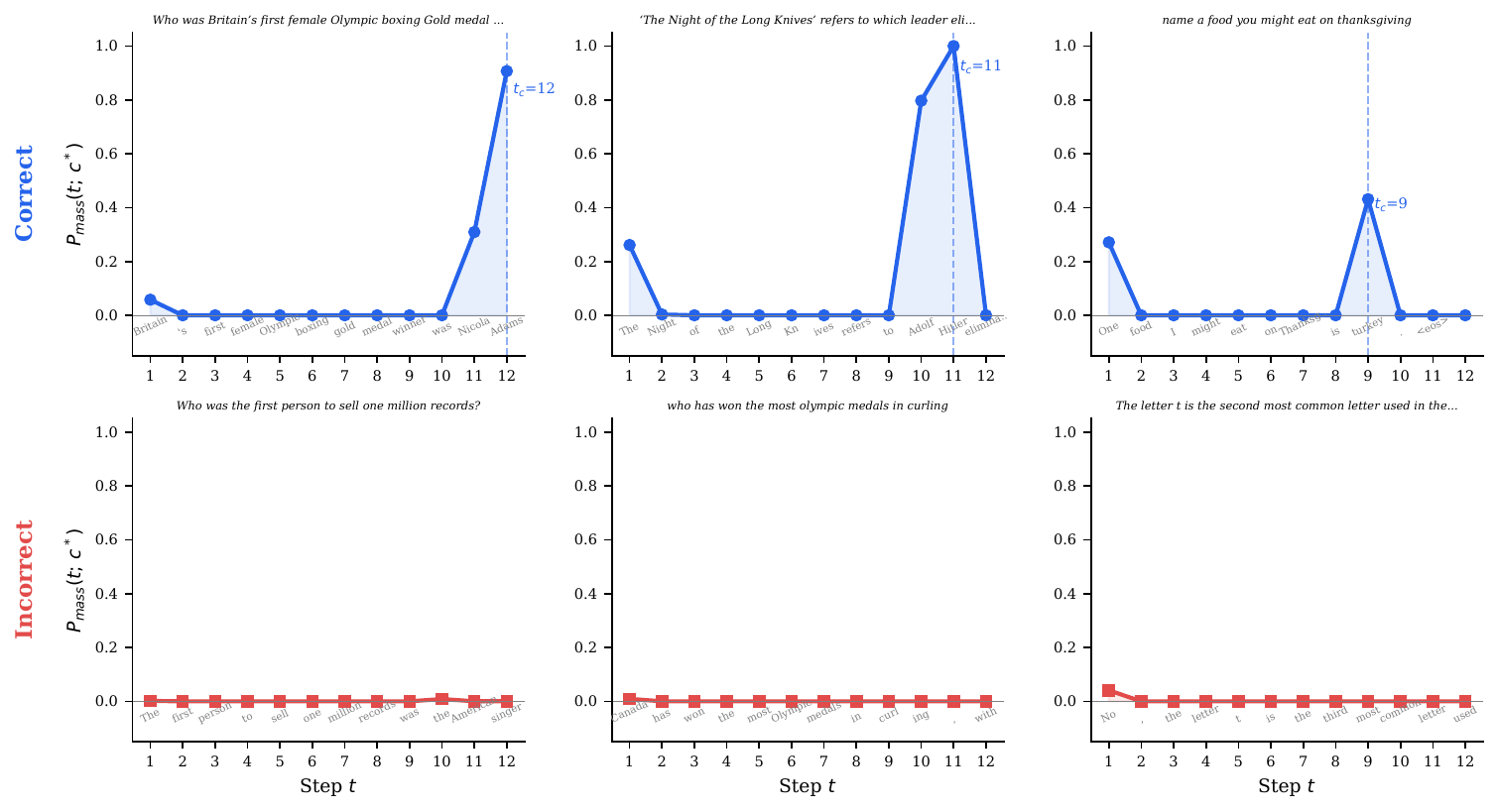}
\caption{$\Pmass(t; c^*)$ trajectories under long-form prompting (\texttt{Answer in a complete sentence}), Qwen3.5-9B Instruct. Top: correct samples; bottom: hallucinated. Dashed lines mark $t_c$ (the position of the answer entity within the generated sentence). Correct samples: $\Pmass$ is near-zero before $t_c$, peaks at $t_c$, collapses afterward. Hallucinated samples: essentially no mass on $c^*$ at any step.}
\label{fig:long_form}
\end{figure}

\begin{figure}[h]
\centering
\includegraphics[width=0.7\linewidth]{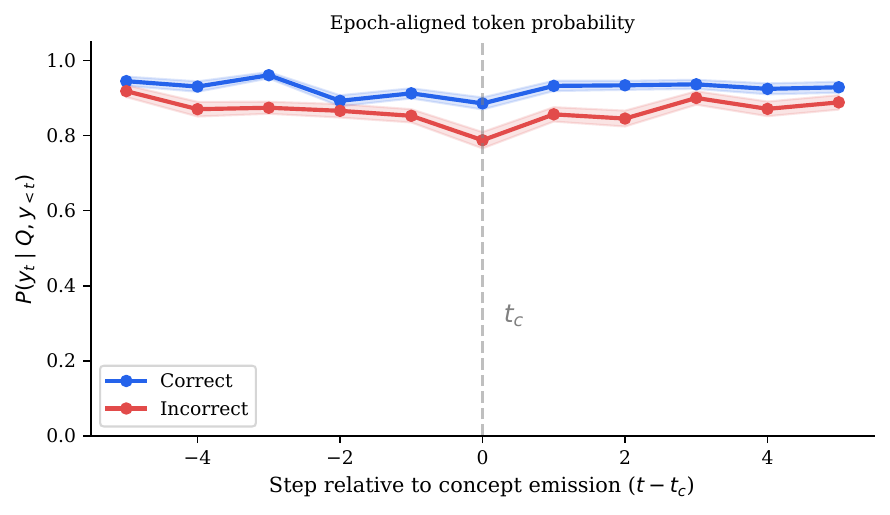}
\caption{Generated-token probability $P(y_t)$ aligned to $t_c$ in long-form generation. Both correct and hallucinated trajectories sit at $\sim$0.85--0.95 throughout, with only a small dip at $t_c$ ($\sim$0.78 vs.\ 0.89). Token-level confidence carries little of the correct/hallucinated signal that $\Pmass$ reveals (cf.\ Figure~\ref{fig:epoch_aligned}b).}
\label{fig:tokprob_aligned}
\end{figure}

\clearpage
\section{$\Sc$ Construction}
\label{app:sc_construction}

$\Sc$ is constructed deterministically with no LLM involvement:
\begin{itemize}[leftmargin=*,itemsep=0pt,topsep=2pt]
    \item \textbf{Alias collection.} TriviaQA: \texttt{answer.value}, \texttt{answer.aliases}, \texttt{answer.normalized\_aliases}. NQ-Open: all entries in \texttt{answer}. Typically 5--20 aliases per question.
    \item \textbf{Lexical variants.} For each alias $a$, six variants: original, lowercase, capitalized, and the same three with a leading space. We exclude \texttt{upper()} variants (single capital letters appear in many unrelated $\Sc$) and newline-prefixed variants (the \texttt{\textbackslash n} token appears in all $\Sc$).
    \item \textbf{First-token extraction.} Each variant is tokenized with \texttt{add\_special\_tokens=False}; the first token ID is added to $\Sc$.
    \item \textbf{Deduplication.} Final $|\Sc|$ is typically 12--20.
\end{itemize}

\clearpage
\section{Multi-Layer Probing}
\label{app:multilayer}

\begin{table}[h]
\caption{Multi-layer probe AUROC (MCQA, logistic regression, 5-fold CV).}
\label{tab:multilayer}
\centering
\small
\begin{tabular}{lcccc}
\toprule
Model & First & Mid & Last$-$1 & Last \\
\midrule
0.8B Inst & 0.565 & 0.624 & \textbf{0.697} & 0.637 \\
0.8B Base & 0.521 & 0.503 & 0.532 & 0.602 \\
\midrule
2B Inst   & 0.601 & \textbf{0.718} & 0.709 & 0.716 \\
2B Base   & 0.513 & 0.553 & \textbf{0.651} & 0.691 \\
\midrule
4B Inst   & 0.690 & \textbf{0.838} & 0.827 & 0.798 \\
4B Base   & 0.653 & 0.742 & \textbf{0.774} & 0.750 \\
\midrule
9B Inst   & 0.736 & 0.848 & \textbf{0.867} & 0.835 \\
9B Base   & 0.726 & 0.747 & 0.776 & \textbf{0.844} \\
\bottomrule
\end{tabular}
\end{table}

\begin{figure}[h]
\centering
\includegraphics[width=\linewidth]{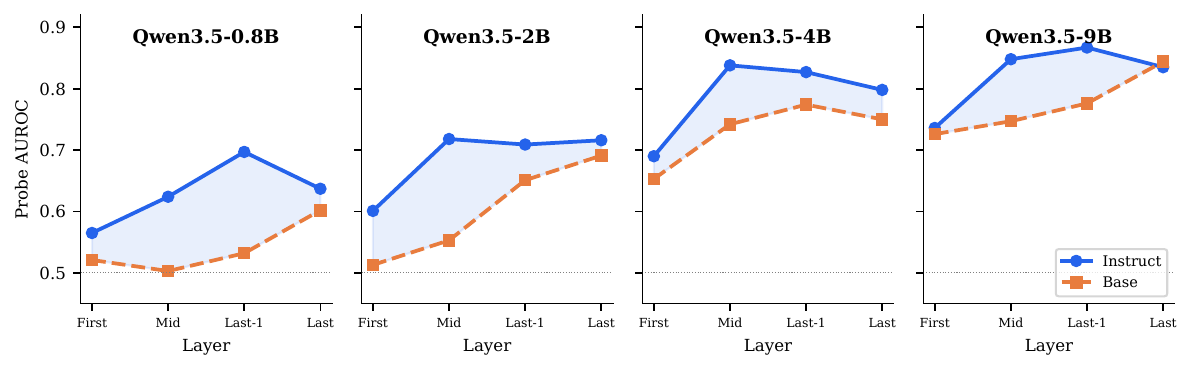}
\caption{Layer-wise probe AUROC (Qwen3.5). Instruct (blue) $>$ Base (orange) at nearly every layer; gap peaks at mid-layers.}
\label{fig:layerwise}
\end{figure}

The first-layer Instruct $>$ Base difference (+0.01 to +0.04) rules out a purely output-formatting explanation. Mid-layer peaks ($\approx +0.06$ to +0.12 gap) indicate that instruction tuning amplifies correctness encoding most strongly in intermediate representations.

\clearpage
\section{Pre- vs.\ Post-Generation Probes}
\label{app:postgen}

\begin{table}[h]
\caption{Pre-gen vs.\ post-gen probe AUROC (MCQA, last layer, $N{=}1{,}000$).}
\label{tab:postgen_app}
\centering
\small
\begin{tabular}{lcccccc}
\toprule
 & \multicolumn{3}{c}{Instruct} & \multicolumn{3}{c}{Base} \\
\cmidrule(lr){2-4} \cmidrule(lr){5-7}
Size & Pre & Post & $\Delta_I$ & Pre & Post & $\Delta_B$ \\
\midrule
0.8B & 0.637 & 0.626 & $-$0.011 & 0.602 & 0.610 & +0.008 \\
2B   & 0.716 & 0.730 & +0.015 & 0.691 & \textbf{0.771} & \textbf{+0.080} \\
4B   & 0.798 & 0.819 & +0.021 & 0.750 & 0.784 & +0.034 \\
9B   & 0.835 & 0.830 & $-$0.005 & 0.844 & 0.843 & $-$0.001 \\
\midrule
Avg & & & +0.005 & & & +0.030 \\
\bottomrule
\end{tabular}
\end{table}

\clearpage
\section{Extended Tables}
\label{app:extended_results}

\begin{table}[h]
\caption{First-token behavior at $t = 1$ (Short-QA). Instruct: $t_c = 1$; Base: $t_c \geq 2$ due to a leading filler.}
\label{tab:firsttoken}
\centering
\small
\begin{tabular}{llcc}
\toprule
Model & Top-1 token $y_1$ & $P_\theta(y_1 \mid Q)$ & $t_c$ \\
\midrule
0.8B Inst & answer / \texttt{The} & 30--83\% & 1 \\
2B Inst   & answer / \texttt{The} & 39--60\% & 1 \\
4B Inst   & answer & 57--83\% & 1 \\
9B Inst   & answer & 52--82\% & 1 \\
\midrule
0.8B Base & \texttt{\textbackslash n\textbackslash n} & 24--84\% & $\geq$1 \\
2B Base   & \texttt{\textbackslash n\textbackslash n} & 93--98\% & $\geq$2 \\
4B Base   & \texttt{\textbackslash n\textbackslash n} & 96--99\% & $\geq$2 \\
9B Base   & \texttt{\textbackslash n\textbackslash n} & 45--66\% & $\geq$1 \\
\bottomrule
\end{tabular}
\end{table}

\begin{table}[h]
\caption{Full scale ablation (Short-QA, $N{=}3{,}000$). TokP / $\Pmass$: AUROC for generated-token probability and $\Pmass(t{=}1)$. $\Spread$: average $\Spread(c^*)$. CF\%: commitment-failure rate.}
\label{tab:scale_full}
\centering
\small
\setlength{\tabcolsep}{4pt}
\begin{tabular}{llccccc}
\toprule
Family & Model & Acc & TokP & $\Pmass$ & $\Spread$ & CF\% \\
\midrule
\multirow{8}{*}{Qwen3.5}
 & 0.8B Inst & 8.3\%  & .759 & .898 & 1.91 & 16\% \\
 & 2B Inst   & 11.0\% & .744 & .893 & 1.43 & 17\% \\
 & 4B Inst   & 24.6\% & .832 & .912 & 1.53 & 26\% \\
 & 9B Inst   & 29.4\% & .806 & .887 & 1.39 & 32\% \\
 & 0.8B Base & 1.8\%  & .524 & .902 & 1.71 & 15\% \\
 & 2B Base   & 1.0\%  & .307 & .881 & 1.46 & 1\% \\
 & 4B Base   & 2.0\%  & .296 & .833 & 1.40 & 2\% \\
 & 9B Base   & 3.0\%  & .318 & .795 & 1.21 & 0.4\% \\
\midrule
\multirow{3}{*}{Llama 3.2/3.1 Inst}
 & 1B Inst   & 14.8\% & .735 & .935 & 1.83 & 16\% \\
 & 3B Inst   & 31.4\% & .776 & .902 & 1.30 & 28\% \\
 & 8B Inst   & 32.8\% & .740 & .882 & 1.34 & 33\% \\
\midrule
\multirow{3}{*}{Llama 3.2/3.1 Base}
 & 1B Base   & 13.1\% & .755 & .887 & 2.09 & 18\% \\
 & 3B Base   & 26.8\% & .781 & .880 & 2.05 & 27\% \\
 & 8B Base   & 31.7\% & .812 & .899 & 1.73 & 27\% \\
\midrule
\multirow{2}{*}{Qwen2.5}
 & 72B Inst & 36.4\% & .705 & .830 & 1.11 & 41\% \\
 & 72B Base & 34.3\% & .776 & .848 & 1.78 & 42\% \\
\midrule
\multirow{2}{*}{Llama 3.1}
 & 70B Inst & 44.7\% & .719 & .816 & 1.18 & \textbf{47\%} \\
 & 70B Base & 37.5\% & .745 & .845 & 1.76 & 40\% \\
\bottomrule
\end{tabular}
\end{table}

All 18 models (14 small + 4 large) now have full metric coverage; CF\% values are verified against the data dump in §\ref{sec:selection}.

\begin{table}[h]
\caption{$\Pmass(t = 1)$ vs.\ generated-token probability (AUROC) with calibration. $\Pmass$ recovers a coherent confidence signal across all 14 models; this is the prerequisite (not the headline) for the commitment-failure analysis.}
\label{tab:epistemic_app}
\centering
\small
\begin{tabular}{llcccccc}
\toprule
Family & Model & Acc & TokProb & $\Pmass$ & $\Delta$ & ECE & Brier \\
\midrule
\multirow{3}{*}{Llama Inst}
 & 1B & 14.8\% & .735 & \textbf{.935} & +.200 & .023 & .070 \\
 & 3B & 31.4\% & .776 & \textbf{.902} & +.126 & .065 & .124 \\
 & 8B & 32.8\% & .740 & \textbf{.882} & +.142 & .080 & .144 \\
\midrule
\multirow{4}{*}{Qwen Inst}
 & 0.8B & 8.3\%  & .759 & \textbf{.898} & +.139 & .048 & .064 \\
 & 2B   & 11.0\% & .744 & \textbf{.893} & +.149 & .055 & .084 \\
 & 4B   & 24.6\% & .832 & \textbf{.912} & +.080 & .079 & .116 \\
 & 9B   & 29.4\% & .806 & \textbf{.887} & +.081 & .096 & .144 \\
\midrule
\multirow{3}{*}{Llama Base}
 & 1B & 13.1\% & .755 & \textbf{.887} & +.132 & .024 & .085 \\
 & 3B & 26.8\% & .781 & \textbf{.880} & +.099 & .037 & .125 \\
 & 8B & 31.7\% & .812 & \textbf{.899} & +.087 & .046 & .122 \\
\midrule
\multirow{4}{*}{Qwen Base}
 & 0.8B & 1.8\% & .524 & \textbf{.902} & +.378 & .045 & .026 \\
 & 2B   & 1.0\% & .307 & \textbf{.881} & +.574 & .006 & .009 \\
 & 4B   & 2.0\% & .296 & \textbf{.833} & +.536 & .011 & .018 \\
 & 9B   & 3.0\% & .318 & \textbf{.795} & +.476 & .020 & .025 \\
\bottomrule
\end{tabular}
\end{table}

\begin{figure}[h]
\centering
\includegraphics[width=\linewidth]{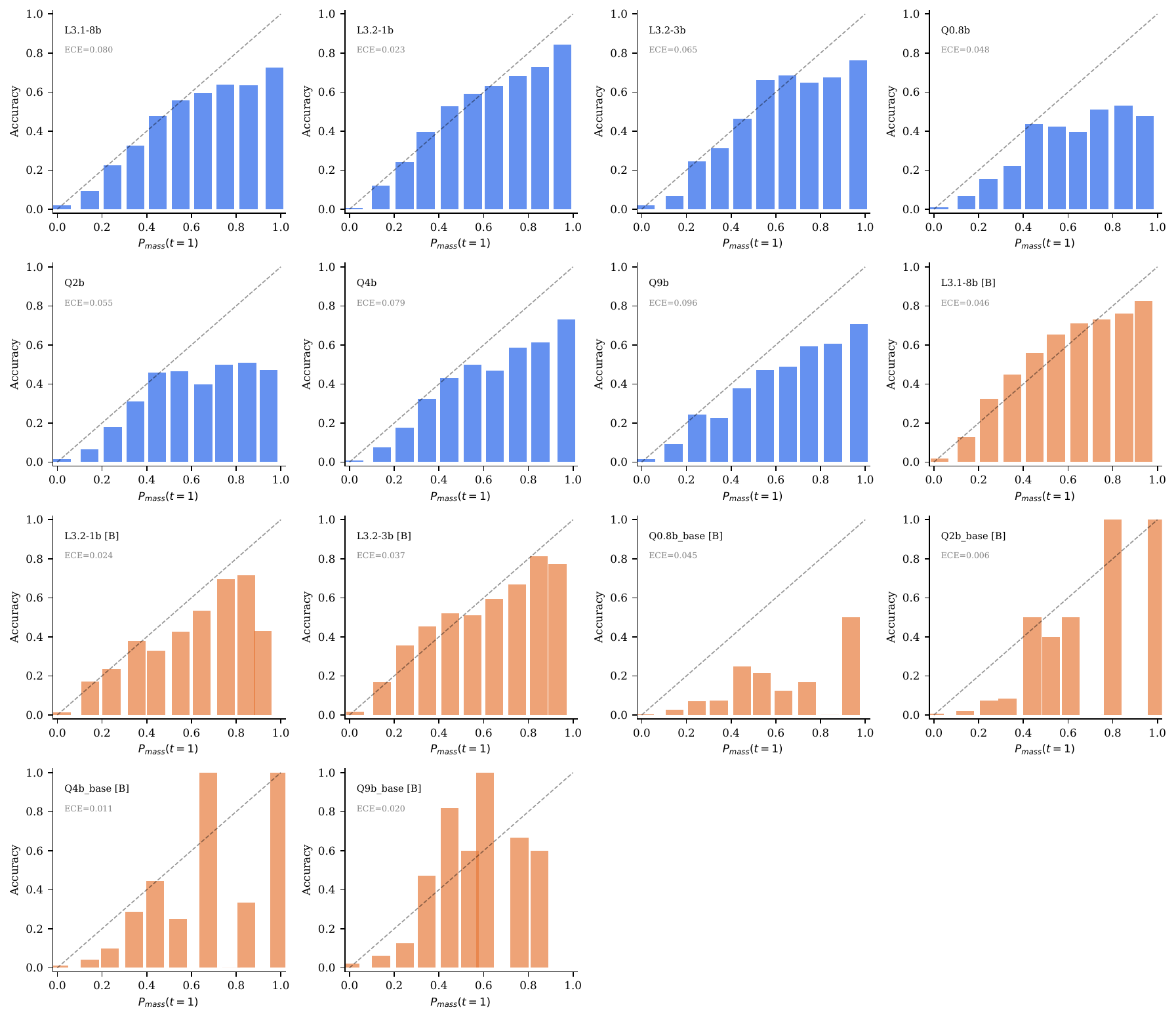}
\caption{$\Pmass(t = 1)$ calibration for all 14 models. Accuracy increases monotonically across $\Pmass$ bins (Instruct ECE 0.023--0.096).}
\label{fig:calibration_app}
\end{figure}

\clearpage
\section{Aggregation: Probability Space vs.\ Log Space}
\label{app:aggregation}

A natural question is how $\Pmass$ relates to standard sequence-level uncertainty estimators that aggregate per-step quantities across the full trajectory, typically in log space (mean log-probability, length-normalized NLL). For multi-token concepts these correspond to two different ways of forming a per-sequence score:

\begin{itemize}[leftmargin=*,itemsep=0pt,topsep=2pt]
    \item \emph{Probability-space first-step:} $\Pmass(t = 1; c^*)$, our default.
    \item \emph{Log-space full sequence:} $\frac{1}{T}\sum_{t=1}^{T} \log P_\theta(y_t \mid Q, y_{<t})$, the standard length-normalized log-likelihood (LN-NLL).
\end{itemize}

These differ in two ways: probability vs.\ log space, and single-step vs.\ trajectory-averaged. The two differences combine to give a sharp empirical contrast. On Qwen3.5-9B Instruct Short-QA ($N{=}3{,}000$), $\Pmass(t = 1; c^*)$ achieves AUROC 0.887 while LN-NLL achieves 0.806 (the same gap as $\Pmass$ vs.\ TokProb in Table~\ref{tab:epistemic_app}). Decomposing the gap by changing one factor at a time:

\begin{center}
\small
\begin{tabular}{lc}
\toprule
Estimator & AUROC \\
\midrule
$\Pmass(t = 1; c^*)$ (prob-space, single-step) & 0.887 \\
Average of per-step $\Pmass$ across full sequence (prob-space, averaged) & 0.738 \\
$\log P_\theta(y_1 \mid Q)$ (log-space, single-step) & 0.812 \\
LN-NLL (log-space, full sequence) & 0.806 \\
\bottomrule
\end{tabular}
\end{center}

Two observations. First, the single-step $\to$ full-sequence drop is large in probability space (0.887 $\to$ 0.738) but small in log space (0.812 $\to$ 0.806): aggregating downstream tokens dilutes the probability-space signal because most steps are deterministic continuations whose $\Pmass$ is essentially zero. Second, at the commitment step itself, probability space ($\Pmass$) outperforms log space ($\log P$) by 0.075 AUROC, because the relevant quantity at the commitment step is the total mass on the correct concept's surface forms---which is additive in probability, not in log-probability. The same picture holds for the relationship between greedy token probability and its log form: the probability is what is being compared by the argmax, so it is the natural quantity to inspect at the moment of commitment. We use probability space throughout the paper for this reason.

\clearpage
\section{Compute Resources}
\label{app:compute}

All experiments were run on NVIDIA GPUs. We did not perform any model fine-tuning---all compute consists of forward passes only. Per-model wall-clock cost scales with model size:

\begin{itemize}
\item \textbf{Sub-10B models} (Qwen3.5-0.8B/2B/4B/9B, Llama-3.2-1B/3B, Llama-3.1-8B): single NVIDIA A100 (80GB), fp16, batch size 8; 0.5--2 hours per (model, dataset) combination.
\item \textbf{72B/70B models} (Qwen2.5-72B, Llama-3.1-70B): single NVIDIA B200 (180GB), fp16, batch size 8; 4--8 hours per (model, dataset) combination. The B200's larger memory accommodates the 70B+ models without tensor parallelism.
\end{itemize}

The full 18-model evaluation across TriviaQA + NQ-Open (3{,}000 samples per dataset, top-50 token probabilities saved at the commitment step) totals approximately 53{,}000 forward passes. The Phase~2 within-concept and between-concept ratio analysis ($D_2$, $D_3$ in Appendix~\ref{app:d2_d3}) is a pure offline post-processing of the saved top-50 probabilities and required no additional GPU compute.

The probing experiments (Appendix~\ref{app:multilayer}) use logistic regression on saved hidden states; each probe trains in under one minute on CPU.

\clearpage
\section{Broader Impacts}
\label{app:broader_impacts}

This paper provides an analytical characterization of LLM hallucination structure: when a model places non-trivial mass on the correct concept yet emits a wrong final answer. The work introduces no new models, datasets, or deployable methods; the contribution is a probe and an empirical characterization. Potential positive societal impact: a clearer mechanistic account of confident hallucination may inform safer deployment and calibration practices, particularly in high-stakes settings where users treat fluency as evidence of reliability. We do not foresee direct negative societal impact from the analysis itself, beyond the general concern that deeper understanding of model failures could in principle inform adversarial exploitation; however, the analytical probe itself is not a generation or control method and does not enable any new attack surface.

\clearpage
\section*{NeurIPS Paper Checklist}

\begin{enumerate}

\item {\bf Claims}
    \item[] Question: Do the main claims made in the abstract and introduction accurately reflect the paper's contributions and scope?
    \item[] Answer: \answerYes{}
    \item[] Justification: The abstract and introduction state that we (i) introduce per-step semantic probability mass $\Pmass$ as an analytical probe over equivalence classes of token completions, (ii) show that 16--47\% of Instruct hallucinations occur with substantial mass on the correct concept and that the rate rises monotonically with scale, and (iii) characterize the underlying mechanism as instruction-induced sharpening at three structural levels. Each claim is empirically supported in §\ref{sec:results} (commitment-failure rates in Table~\ref{tab:cf_by_scale}; sharpening at three levels in §\ref{sec:fragmentation}; pre-generation hidden-state signal in §\ref{sec:frontload}).

\item {\bf Limitations}
    \item[] Question: Does the paper discuss the limitations of the work performed by the authors?
    \item[] Answer: \answerYes{}
    \item[] Justification: §\ref{sec:discussion} (Limitations paragraph) discusses (i) the dependence of $\Pmass$ on $\Sc$ construction and the implications when alias completeness is imperfect, (ii) the restriction to first-token commitment and the migration of $t_c$ in long-form generation, (iii) the analytical-probe (not detector) status of $\Pmass$, and (iv) the model scope (Qwen and Llama families up to 72B; no closed-source or non-English models tested).

\item {\bf Theory assumptions and proofs}
    \item[] Question: For each theoretical result, does the paper provide the full set of assumptions and a complete (and correct) proof?
    \item[] Answer: \answerYes{}
    \item[] Justification: The paper has one formal result, Proposition~\ref{prop:pmass}, relating $\Pmass$ to concept belief under a latent-concept generation model. Assumptions (alias completeness and concept separation) are stated explicitly in §\ref{sec:setup}; the full proof is in Appendix~\ref{app:proofs}.

\item {\bf Experimental result reproducibility}
    \item[] Question: Does the paper fully disclose all the information needed to reproduce the main experimental results of the paper to the extent that it affects the main claims and/or conclusions of the paper (regardless of whether the code and data are provided or not)?
    \item[] Answer: \answerYes{}
    \item[] Justification: All 18 models are publicly available on HuggingFace (Qwen3.5/Qwen2.5 \citep{qwen2025qwen3, qwen2024qwen25}, Llama-3.2/3.1 \citep{llama2024llama3}); all four datasets (TriviaQA, NQ-Open, MMLU, ARC-Challenge) are public. Sample counts, prompts, decoding settings (greedy), $\Scstar$ construction (Appendix~\ref{app:sc_construction}), and metric definitions are specified in §\ref{sec:setup}.

\item {\bf Open access to data and code}
    \item[] Question: Does the paper provide open access to the data and code, with sufficient instructions to faithfully reproduce the main experimental results, as described in supplemental material?
    \item[] Answer: \answerYes{}
    \item[] Justification: Anonymized code is provided in the supplementary material as a zip archive, including scripts to reproduce the main results (commitment-failure rate computation, within-population analysis, $H_{t=2}$ classification, $D_2$/$D_3$ measurements). All datasets are publicly available (TriviaQA, NQ-Open, MMLU, ARC-Challenge); models are publicly released on HuggingFace.

\item {\bf Experimental setting/details}
    \item[] Question: Does the paper specify all the training and test details (e.g., data splits, hyperparameters, how they were chosen, type of optimizer) necessary to understand the results?
    \item[] Answer: \answerYes{}
    \item[] Justification: §\ref{sec:setup} specifies all 18 models with sizes and variants, datasets and sample counts (3{,}000 short-QA per model; 2{,}672 MCQA per model), decoding (greedy), quantization (4-bit NF4 for Qwen3.5 sub-10B; fp16 for the rest), probe protocol (5-fold CV logistic regression), and metric definitions (AUROC, ECE, Brier).

\item {\bf Experiment statistical significance}
    \item[] Question: Does the paper report error bars suitably and correctly defined or other appropriate information about the statistical significance of the experiments?
    \item[] Answer: \answerYes{}
    \item[] Justification: Within-population comparisons report Welch's $t$-test statistics, $p$-values, and Cohen's $d$ effect sizes for all 18 models (§\ref{sec:fragmentation}, Appendix Table~\ref{tab:within_population}); the $H_{t=2}$ multi-token analysis reports Cohen's $d$ across the same 18 models (Appendix~\ref{app:t2_entropy}); a primer on these statistics is in Appendix~\ref{app:stats_primer}. All experiments use deterministic greedy decoding, so there is no run-to-run sampling variance to capture with confidence intervals; the variability captured by Welch's $t$ is the relevant within-sample distribution variability.

\item {\bf Experiments compute resources}
    \item[] Question: For each experiment, does the paper provide sufficient information on the computer resources (type of compute workers, memory, time of execution) needed to reproduce the experiments?
    \item[] Answer: \answerYes{}
    \item[] Justification: Compute resources are described in Appendix~\ref{app:compute} (single A100 80GB for sub-10B models; single B200 180GB for 70B+ models; all forward-pass-only experiments).

\item {\bf Code of ethics}
    \item[] Question: Does the research conducted in the paper conform, in every respect, with the NeurIPS Code of Ethics \url{https://neurips.cc/public/EthicsGuidelines}?
    \item[] Answer: \answerYes{}
    \item[] Justification: The work analyzes publicly released LLMs on standard public QA benchmarks. No human subjects, no scraped data, no new models or datasets released.

\item {\bf Broader impacts}
    \item[] Question: Does the paper discuss both potential positive societal impacts and negative societal impacts of the work performed?
    \item[] Answer: \answerYes{}
    \item[] Justification: Potential positive and negative societal impacts are discussed in Appendix~\ref{app:broader_impacts}.

\item {\bf Safeguards}
    \item[] Question: Does the paper describe safeguards that have been put in place for responsible release of data or models that have a high risk for misuse (e.g., pre-trained language models, image generators, or scraped datasets)?
    \item[] Answer: \answerNA{}
    \item[] Justification: No new models, datasets, or scraped data are released.

\item {\bf Licenses for existing assets}
    \item[] Question: Are the creators or original owners of assets (e.g., code, data, models), used in the paper, properly credited and are the license and terms of use explicitly mentioned and properly respected?
    \item[] Answer: \answerYes{}
    \item[] Justification: Models used: Qwen3.5/Qwen2.5 (Apache 2.0; \citep{qwen2025qwen3, qwen2024qwen25}) and Llama-3.1/3.2 (Llama Community License; \citep{llama2024llama3}). Datasets: TriviaQA \citep{joshi2017triviaqa}, NQ-Open \citep{kwiatkowski2019natural}, MMLU \citep{hendrycks2021measuring}, ARC-Challenge \citep{clark2018think}, all distributed under permissive academic-use licenses.

\item {\bf New assets}
    \item[] Question: Are new assets introduced in the paper well documented and is the documentation provided alongside the assets?
    \item[] Answer: \answerNA{}
    \item[] Justification: No new models, datasets, or other assets are released.

\item {\bf Crowdsourcing and research with human subjects}
    \item[] Question: For crowdsourcing experiments and research with human subjects, does the paper include the full text of instructions given to participants and screenshots, if applicable, as well as details about compensation (if any)?
    \item[] Answer: \answerNA{}
    \item[] Justification: No crowdsourcing or human subjects.

\item {\bf Institutional review board (IRB) approvals or equivalent for research with human subjects}
    \item[] Question: Does the paper describe potential risks incurred by study participants, whether such risks were disclosed to the subjects, and whether Institutional Review Board (IRB) approvals (or an equivalent approval/review based on the requirements of your country or institution) were obtained?
    \item[] Answer: \answerNA{}
    \item[] Justification: No human subjects.

\item {\bf Declaration of LLM usage}
    \item[] Question: Does the paper describe the usage of LLMs if it is an important, original, or non-standard component of the core methods in this research?
    \item[] Answer: \answerNA{}
    \item[] Justification: LLMs are the subject of study but are not used as a methodological component for core method development.

\end{enumerate}

\end{document}